\newtheorem*{remark}{Remark}
\newtheorem{assumption}{Assumption}
\newtheorem{theorem}{Theorem}
\newtheorem{lemma}{Lemma}
\begin{document}

\title{Formation Control for Enclosing and Tracking via Relative Localization}

\author{XUEMING LIU}
\author{DENGYU ZHANG}
\author{QINGRUI ZHANG}
\author{TIANJIANG HU}
\affil{Sun Yat-sen University, Shenzhen, P.R. China}


\receiveddate{ 
This work is supported by the Key-Area Research and Development Program of Guangdong Province under Grant 2024B1111060004, in part by the Basic and Applied Basic Research Foundation of Guangdong Province under Grant 2024A1515012408, in part by the National Natural Science Foundation of China under Grant 62103451 and 62473390, and in part by the Shenzhen Science and Technology Program JCYJ20220530145209021.}


\corresp{{\itshape (Corresponding author: Qingrui Zhang and Tianjiang Hu)}}

\authoraddress{ All authors are with the School of Aeronautics and Astronautics, Sun Yat-sen University (Shenzhen Campus), Shenzhen 518107, P.R. China
(e-mail: \{liuxm93, zhangdy56\}@mail2.sysu.edu.cn, \{zhangqr9, hutj3\}@mail.sysu.edu.cn).}


\markboth{XUEMING ET AL.}{FORMATION CONTROL FOR ENCLOSING AND TRACKING VIA RELATIVE LOCALIZATION}
\maketitle

\begin{abstract} This paper proposes an integrated framework for coordinating multiple unmanned aerial vehicles (UAVs) in a distributed manner to persistently enclose and track a moving target without relying on external localization systems. The proposed framework consists of three modules: cooperative state estimators, circular formation pattern generators, and formation tracking controllers. In the cooperative state estimation module, a recursive least squares estimator (RLSE) for estimating the relative positions between UAVs is integrated with a distributed Kalman filter (DKF), enabling a persistent estimation of the target's state. When a UAV loses direct measurements of the target due to environmental occlusion, measurements from neighbors are aligned into the UAV's local frame to provide indirect measurements. The second module focuses on planning a desired circular formation pattern using a coupled oscillator model. This pattern ensures an even distribution of UAVs around a circle that encloses the moving target. The persistent excitation property of the circular formation is crucial for achieving convergence in the first module. Finally, a consensus-based formation controller is designed to enable multiple UAVs to asymptotically track the planned circular formation pattern while ensuring bounded control inputs. Theoretical analysis demonstrates that the proposed framework ensures asymptotic tracking of a target with constant velocity. For a target with varying velocity, the tracking error converges to a bounded region related to the target's maximum acceleration. Simulations and experiments validate the effectiveness of the proposed algorithm.

  \end{abstract}
  
  \begin{IEEEkeywords}
  Circular formation control, persistent tracking, relative localization, target enclosing, multi-UAV systems. 
  \end{IEEEkeywords}
  
  \section{INTRODUCTION}

  Formation control of collective unmanned aerial vehicles (UAVs) has been extensively investigated in recent years due to its great advantages and potential in diverse missions \cite{chungSurveyAerialSwarm2018,zhang2017aerodynamics,zhang2021robust,DYZhang_IROS2023,CHYu_ICRA2024}. Among the various formation control applications, target tracking has garnered significant attention in areas such as persistent surveillance \cite{rajaEfficientFormationControl2021}, wildfire monitoring \cite{bayramTrackingWildlifeMultiple2017}, autonomous aerial filming \cite{kratkyAutonomousAerialFilming2021}, among others \cite{ZZhang_RAL2024,ZZhang_TNNLS2025}. In particular, circular formation, or circumnavigation, is a specialized pattern widely used for target enclosing and tracking \cite{litimeinSurveyTechniquesCircular2021,brinon-arranzCooperativeControlDesign2014}. This formation requires UAVs to be evenly distributed around the target at a specified distance.
  
  To accomplish the task of target enclosing and tracking, UAVs first need to estimate the target's state. Conventional approaches to cooperative state estimation primarily rely on the framework of distributed Kalman filters (DKFs) \cite{liDistributedKalmanFilter2020,liDistributedKalmanFilter2021,olfati-saberCollaborativeTargetTracking2011,lianDistributedKalmanConsensus2022,wangCooperativeTargetTracking2012,olfati-saberDistributedKalmanFiltering2007,liSelfLocalizationDistributedSystems2022,doostmohammadianDistributedEstimationApproach2022,xuDistributedPseudolinearEstimation2017}. These methods are considered to improve estimation accuracy and target tracking performance. A common assumption in these approaches is that the position of all UAVs, or at least a subset of them, can be obtained in a global frame through an external localization system like the Global Positioning System (GPS). However, obtaining reliable GPS signals can be difficult or even impossible in some scenarios \cite{jainEncirclementMovingTargets2022,dongCoordinateFreeCircumnavigationMoving2022}. Although some efforts have been made to address GPS-denied environments \cite{liuMovingTargetCircumnavigationUsing2023,dongCoordinateFreeCircumnavigationMoving2022,liVGSwarmVisionBasedGene2023,jainEncirclementMovingTargets2022,liuFormationControlMoving2023,yuBearingonlyCircumnavigationControl2019}, these works primarily focus on target tracking in open or unobstructed environments and do not account for situations where environmental factors obscure the target. Thus, achieving consistent cooperative state estimation for target tracking remains challenging without an external localization system, particularly in cases of partial measurement loss.
  
  To address this issue, this paper proposes an intuitive approach: incorporating the relative position information between UAVs into the traditional DKF method. In particular, the impact of occlusion can be mitigated by coordinating a team of UAVs at different locations \cite{ibenthalLocalizationPartiallyHidden2023}. Thus, UAV $i$ can derive an indirect measurement of the target by combining its relative position with UAV $j$ and the direct measurement of the target obtained by $j$. However, in the absence of external positioning systems, it is difficult to directly obtain the relative position of neighboring UAVs \cite{nguyenPersistentlyExcitedAdaptive2020}. To this end, several researchers have proposed different relative position estimators by integrating various relative measurements between UAVs \cite{liuMovingTargetCircumnavigationUsing2023,yuBearingonlyCircumnavigationControl2019,nguyenPersistentlyExcitedAdaptive2020,guoUltraWidebandOdometryBasedCooperative2020}, such as relative displacement, relative bearing, and relative distance. To ensure the convergence of these algorithms, all these studies emphasize that the measurement vector must meet persistent excitation conditions. However, these studies only treat the persistent excitation condition as a prerequisite assumption, and how to ensure the validity of this assumption is rarely discussed. Therefore, designing a formation control scheme that guarantees estimator convergence while effectively achieving the target enclosure and tracking task remains a significant challenge.

 To this end, this paper develops a variant of the coupled oscillator-based formation pattern design method from our previous work \cite{liuFormationControlMoving2023}. This approach is inspired by the stabilization problem in phase oscillators \cite{kleinIntegrationCommunicationControl2007,sepulchreStabilizationPlanarCollective2007,sepulchreStabilizationPlanarCollective2008}, specifically, the $(m,n)$-pattern problem \cite{dorflerSynchronizationComplexNetworks2014}. Unlike \cite{liuFormationControlMoving2023}, both the desired relative position and the desired relative velocity are designed, enhancing the performance of the tracking task. Most importantly, this time-varying circular formation inherently exhibits the property of persistent excitation. Therefore, using the desired relative state as input, a consensus-based formation controller is proposed to achieve target enclosing and tracking while ensuring the convergence of the relative state estimator. Compared to \cite{liuFormationControlMoving2023}, this paper incorporates measurement noise and discusses its impact on formation control. Additionally, in \cite{liuFormationControlMoving2023}, each UAV independently estimates the target's state and uses this estimate directly for control. In contrast, this paper employs a consensus-based DKF method to align the state estimates across all UAVs, ensuring consistency and reducing tracking errors.

 While most literature on target tracking typically assumes the target is stationary, moving at a constant velocity, or drifting slowly\cite{jainEncirclementMovingTargets2022,jainEncirclementMovingTargets2019,yangObservabilityEnhancementBoresightCalibration2023,liLocalizationCircumnavigationMultiple2018,doostmohammadianDistributedEstimationApproach2022,dongCoordinateFreeCircumnavigationMoving2022,shamesCircumnavigationUsingDistance2012a,xuDistributedPseudolinearEstimation2017}, this paper examines the formation controller's ability to track a target with variable velocity. Theoretical analysis indicates that the tracking error converges to a bounded value, which is positively correlated with the target's acceleration. These insights aid in the analysis and design of tracking algorithms. 
  To summarize, the main contributions are highlighted as follows.     \begin{enumerate}
      \item A relative state estimation framework for target tracking in the absence of external localization systems is proposed by combining a recursive least squares estimator (RLSE) from \cite{nguyenPersistentlyExcitedAdaptive2020} and a classical DKF from \cite{olfati-saberDistributedKalmanFiltering2007, liDistributedKalmanFilter2020, wangCooperativeTargetTracking2012}.
   
      \item A consensus-based formation controller for a second-order system is designed by integrating relative state estimates with a coupled oscillator-based formation pattern. This approach simultaneously guarantees the convergence of both the target tracking algorithm and the relative state estimators.
    
      \item Theoretical analysis demonstrates that the proposed framework ensures asymptotic stability for enclosing and tracking a target moving at a constant velocity without measurement noise. Furthermore, for a target with variable velocity, the tracking performance is related to the target's maximum acceleration, resulting in ultimately bounded tracking errors.
    \end{enumerate}

  The remainder of this paper is organized as follows. Section~\ref{prelimi} provides the fundamental definitions and outlines the problem. In Section~\ref{stateEsti}, the relative state estimation framework is proposed. Section~\ref{formationDesign} describes the coupled oscillator-based circular formation designed method for discrete second-order systems. The formation controller and theoretical analysis are given in Section~\ref{formationControl} and Section~\ref{Analysis} respectively. Numerical simulations and physical experiments are presented in Section~\ref{sim&exp}. At last, Section~\ref{conclu} gives the conclusions.

  \section{PRELIMINARIES} \label{prelimi}
  \emph{Notations:} The sets of natural numbers and real numbers are denoted by $\mathbb{N}$ and $\mathbb{R}$, respectively. For vector $\boldsymbol{x} \in \mathbb{R}^m$, $\|\boldsymbol{x}\|$ and $\|\boldsymbol{x}\|_\infty$ denotes the $L_2$ and $L_\infty$ norm of $\boldsymbol{x}$ respectively, and $\boldsymbol{x}^T$ is the transpose. The representations are also valid for matrices $A\in \mathbb{R}^{m\times n} $. The $m$-dimensional vector of zeros and ones is given by $\boldsymbol{0}_m$ and $\boldsymbol{1}_m $ respectively. $I$ and $ \otimes $ denote the identity matrix and the Kronecker product respectively. 
    
  \subsection{Graph Theory}

   In this paper, the interaction of sensing and communication among $n$ UAVs is represented using an undirected graph $ \mathcal{G}_f=(\mathcal{V}_f,\mathcal{E}_f) $, where $ \mathcal{V}_f=\{1,2...n\} $ denote the node set, and $ \mathcal{E}_f\subseteq\mathcal{V}_f\times\mathcal{V}_f $ denote the edge set. Furthermore, set the target with $i=0$, and the topology between UAVs and target is modeled by graph $ \mathcal{G}=(\mathcal{V},\mathcal{E}) $, where $ \mathcal{V}=\{0\} \cup \mathcal{V}_f $, and $ \mathcal{E}\subseteq\mathcal{V}\times\mathcal{V} $. The neighbors of UAV  $i\in\mathcal{V}_f$ are denoted by $\mathcal{N}_i=\{j\in\mathcal{V}_f:(i,j)\in\mathcal{E}_f\}$. The UAVs can obtain information from their neighbors by communication or sensing. The weighted adjacency matrix of $ \mathcal{G}$ is denoted by $\mathcal{A} =[a_{ij}]\in\mathbb{R}^{(n+1)\times(n+1)}$. If and only if $(i,j)\in\mathcal{E}$, then $a_{ij} > 0$, and $a_{ij}=0$, otherwise. Give the corresponding Laplacian matrix of the $ \mathcal{G} $ as $$ \mathcal{L} = \left[ l_{ij} \right]_{(n+1)\times (n+1)} = \left[ \begin{array}{cc} 0& \boldsymbol{0}_{1\times n} \\  \mathcal{L}_t& \mathcal{L}_f \\ \end{array} \right]$$ where $l_{ii}=\sum_{j=0}^{n}a_{ij}$ and $l_{ij}=-a_{ij}$ for $i\ne j$. It can be seen that $\mathcal{L}_t \triangleq \left[ \begin{matrix} -a_{10}& \dots & -a_{n0} \\ \end{matrix} \right]^T$ describes whether the UAV can estimate the state of the target, and $\mathcal{L}_f$ describes the internal communication in the UAVs. Since $ \mathcal{G}_f$ is an undirected graph, the weighted adjacency matrix $\mathcal{A}_f$ associated with graph $ \mathcal{G}_f$ satisfies $a_{ij}=a_{ji}$ for all $i,j\in\mathcal{V}_f$. Next, we will give the basic assumption and discuss the eigenvalue of $ \mathcal{L} $.
  
      \begin{assumption} \label{assump:graph}
          The graph $ \mathcal{G}_f$ is a connected, undirected, uniformly weighted, and circulant graph. In addition, assume that $a_{i0}>0, \forall i \in \mathcal{V}_f$, which means UAVs can always estimate the state of the target through direct measurements or information exchange between neighbors. 
      \end{assumption}
  
      \begin{lemma}\label{lemma:eigenvalue_L}
          Given that Assumption \ref{assump:graph} holds, the eigenvalues of $ \mathcal{L} $ are represented by $\lambda_0 = 0 < \lambda_1 \leq \dots \lambda_n$, where $\lambda_i \in \mathbb{R}, \forall i\in \mathcal{V}_f $ are the eigenvalues of $ \mathcal{L}_f $.
      \end{lemma}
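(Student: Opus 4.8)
The plan is to exploit the block lower-triangular structure of $\mathcal{L}$. Since its first row equals $\begin{bmatrix} 0 & \boldsymbol{0}_{1\times n}\end{bmatrix}$, the characteristic polynomial factors as $\det(\mathcal{L}-\lambda I)=(-\lambda)\,\det(\mathcal{L}_f-\lambda I)$, so the spectrum of $\mathcal{L}$ is exactly $\{0\}\cup\operatorname{spec}(\mathcal{L}_f)$ — and this holds even though $\mathcal{L}$ itself is not symmetric. It then remains to prove that every eigenvalue of $\mathcal{L}_f$ is real and strictly positive; relabelling those $n$ eigenvalues as $0<\lambda_1\le\cdots\le\lambda_n$ and setting $\lambda_0=0$ yields the stated ordering, and in particular $0$ is a simple eigenvalue of $\mathcal{L}$ because $\mathcal{L}_f$ is then nonsingular.

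To analyse $\mathcal{L}_f$, I would write $\mathcal{L}_f = L_f + D_0$, where $L_f$ is the weighted Laplacian of the UAV graph $\mathcal{G}_f$ (with $[L_f]_{ii}=\sum_{j\in\mathcal{N}_i}a_{ij}$ and $[L_f]_{ij}=-a_{ij}$ for $i\ne j$) and $D_0=\operatorname{diag}(a_{10},\dots,a_{n0})$ collects the UAV-to-target weights. This identity is immediate from $l_{ii}=\sum_{j=0}^{n}a_{ij}=a_{i0}+\sum_{j\in\mathcal{N}_i}a_{ij}$ together with $l_{ij}=-a_{ij}$ for $i\ne j$. Because $\mathcal{G}_f$ is undirected, $a_{ij}=a_{ji}$, so $L_f$ is symmetric and positive semidefinite; together with the diagonal (hence symmetric) $D_0$, this makes $\mathcal{L}_f$ symmetric, so all of its eigenvalues are real.

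Strict positivity then follows from a quadratic-form bound: for every $\boldsymbol{x}\in\mathbb{R}^{n}\setminus\{\boldsymbol{0}_n\}$,
\[
\boldsymbol{x}^T\mathcal{L}_f\boldsymbol{x}=\boldsymbol{x}^T L_f\boldsymbol{x}+\boldsymbol{x}^T D_0\boldsymbol{x}\ \ge\ \boldsymbol{x}^T D_0\boldsymbol{x}=\sum_{i=1}^{n}a_{i0}\,x_i^2\ >\ 0,
\]
where the inequality uses the positive semidefiniteness of $L_f$ and the final step uses $a_{i0}>0$ for all $i\in\mathcal{V}_f$ from Assumption~\ref{assump:graph}. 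Hence $\mathcal{L}_f$ is positive definite, its eigenvalues satisfy $0<\lambda_1\le\cdots\le\lambda_n$, and combining this with the spectral decomposition above completes the argument.

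I do not expect a genuine obstacle here; the only points needing care are being explicit that the triangular determinant factorization is legitimate for the non-symmetric $\mathcal{L}$, and observing that the connectivity, uniform-weighting, and circulant hypotheses of Assumption~\ref{assump:graph} are not actually needed for this lemma — positive definiteness of $\mathcal{L}_f$ stems solely from each UAV carrying a positive weight $a_{i0}$ to the target — even though those additional structural properties will be invoked in the later formation-design and controller-analysis sections.
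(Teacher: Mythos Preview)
Your proposal is correct and follows essentially the same route as the paper: factor the characteristic polynomial via the block lower-triangular structure of $\mathcal{L}$, then show $\mathcal{L}_f$ is symmetric positive definite by the quadratic-form identity $\boldsymbol{x}^T\mathcal{L}_f\boldsymbol{x}=\tfrac12\sum_{i,j}a_{ij}(x_i-x_j)^2+\sum_i a_{i0}x_i^2>0$ (your splitting $\mathcal{L}_f=L_f+D_0$ is just this identity in matrix form). Your observation that connectivity, uniform weighting, and circulancy are not actually needed here is also accurate---the paper invokes ``undirected and circulant'' only to justify $\mathcal{A}_f^T=\mathcal{A}_f$, for which ``undirected'' alone suffices.
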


      \begin{IEEEproof}
          It is well known that $ \mathcal{L} $ has a eigenvalue $\lambda_0 = 0$ with corresponding right eigenvector $\boldsymbol{1}_{n+1}$ \cite{eichlerClosedformSolutionOptimal2014}. The eigenvalues of $\mathcal{L}_f$ are denoted as $\lambda_i, i=1,\dots,n$, then 
              $$det(\lambda I - \mathcal{L}) = \begin{vmatrix} \lambda & \boldsymbol{0}_{1\times n} \\ -\mathcal{L}_t & \lambda I - \mathcal{L}_f \end{vmatrix} = (-1)^{1+1} \lambda det(\lambda I - \mathcal{L}_f)$$ 
          which means the eigenvalues $\lambda_i$ of $\mathcal{L}_f$ are also the eigenvalues of $\mathcal{L}$. Since the graph $ \mathcal{G}_f$ is undirected and circulant, then $\mathcal{A}_f^T = \mathcal{A}_f$, and $ \mathcal{L}_f$ is a real symmetric matrix.
          For all nonzero vectors $\boldsymbol{x} = \left[ \begin{matrix} x_{1}& \dots & x_{n} \end{matrix} \right]^T \in \mathbb{R}^m$, 

          $$\boldsymbol{x}^T \mathcal{L}_f \boldsymbol{x} = \frac{1}{2} \sum_{i=1}^{n}\sum_{j=1}^{n} a_{ij} (x_i - x_j)^2 + \sum_{i=1}^{n} a_{i0} x_i^2 > 0$$
          Hence, $\mathcal{L}_f$ is positive definite. To sum up, Lemma \ref{lemma:eigenvalue_L} is proved.
      \end{IEEEproof}
   
  \subsection{System Dynamics and Measurements}
  Without loss of generality,  the moving target and UAVs are characterized by the following discrete-time double-integrator model in some common coordinate frame $ \mathcal{F}_w 
   $.
   \begin{equation}\label{eq:dynamic}
                  \begin{bmatrix}  \boldsymbol{p}_{i,k+1} \\ \boldsymbol{v}_{i,k+1} \end{bmatrix} = \begin{bmatrix} I & TI \\ \boldsymbol{0} & I \end{bmatrix} \begin{bmatrix}  \boldsymbol{p}_{i,k} \\ \boldsymbol{v}_{i,k} \end{bmatrix} \\  + \begin{bmatrix}  \boldsymbol{0} \\ TI \end{bmatrix} \boldsymbol{u}_{i,k} 
                  + \begin{bmatrix}  \boldsymbol{0} \\ TI \end{bmatrix} \boldsymbol{\omega}_{i,k} 
      \end{equation}
  for $ i\in \mathcal{V}$, where $\boldsymbol{p}_{i,k}, \boldsymbol{v}_{i,k} \in \mathbb{R}^2$ are the position and velocity of the target or UAVs at time $kT$ where $T$ denote as sampling interval. $ \boldsymbol{\omega}_i \backsim \mathcal{N}(0,Q_i) $ is a zero-mean Gaussian noise with covariance matrix $Q_i$. For $i=0$, $ \boldsymbol{u}_{0,k} $ is an unknown bounded input of the target with $\| \boldsymbol{u}_{0,k} \| \leq U_0 $. When $ \boldsymbol{u}_{0,k} \equiv 0 $, the target is considered moving at constant velocity in the plane. It is assumed that the maximum speed of the target is $ v_{max0} \geq \| \boldsymbol{v}_{0,k} \|$. For all $ i \in \mathcal{V}_f $, $ \boldsymbol{u}_{i,k} $ is the bounded control input for UAVs with $\| \boldsymbol{u}_{i,k} \| \leq U_i $. And the maximum speed of UAVs is denoted as $ v_{maxi} \geq \| \boldsymbol{v}_{i,k} \|$.

      \begin{remark}
          In this paper, a ground target is considered. It is physically realistic for a target to be constrained by its maximum speed, and its acceleration more accurately reflects its maneuverability, such as turning capabilities. It is assumed that the UAVs are operating at a constant altitude.  Therefore, their positions will be mapped onto the $x,y$-plane, where the target is located. In this paper, we will focus solely on discussing positions within the $x,y$-plane.
      \end{remark}
  
     \begin{figure}[tbp]
      \centering
      \includegraphics[width=0.9\linewidth]{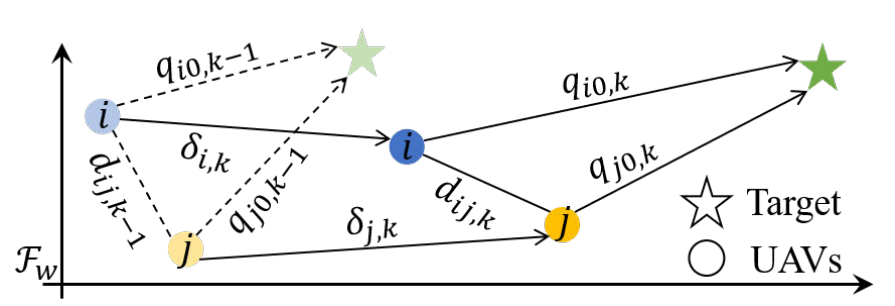}
      \caption{ Illustration of relative measurements between ``UAV-UAV" and ``UAV-Target" from $(k-1)T$ to $kT$.   }
      \label{fig:measurements}
    \end{figure}
  At each time step $k$, it is assumed that UAV $i$ can obtain its self-displacement measurement, denoted as $\boldsymbol{\delta}_{i,k}$, and transmit it to its neighbor UAV $j$ via the network. Thus, the relative displacement $\boldsymbol{\delta}_{ij,k}$ between UAV $i,j$ can be calculated. Additionally, the relative distance $d_{ij,k}$ between UAV $i,j$ can also be measured. Finally, the relative position measurement between UAV $i$ and the target, denoted as $ \boldsymbol{q}_{i0,k}$, is assumed to be obtained by onboard vision sensors. These measurements are illustrated in Fig.~\ref{fig:measurements}, and formally defined below. 
    \begin{equation}\label{eq:measurements}
    \begin{aligned}
            & \boldsymbol{\delta}_{i,k} \triangleq \boldsymbol{p}_{i,k} - \boldsymbol{p}_{i,k-1} + \boldsymbol{o}_{1,k} \\
            & \boldsymbol{\delta}_{ij,k}  \triangleq \boldsymbol{\delta}_{i,k} -\boldsymbol{\delta}_{j,k} \\
            & d_{ij,k} \triangleq \|\boldsymbol{p}_{ij,k} \| + \boldsymbol{o}_{2,k} \\
            & \boldsymbol{q}_{i0,k} \triangleq \boldsymbol{p}_{i0,k}  + \boldsymbol{o}_{3,k} , \quad i,j\in \mathcal{V}_f
          \end{aligned} 
    \end{equation}
  where $ \boldsymbol{o}_{m,k} \backsim \mathcal{N}(0,Q_m),m=1,2,3$ is the zero-mean Gaussian measurement noise with covariance matrix $Q_m$, and $\boldsymbol{p}_{ij,k} \triangleq \boldsymbol{p}_{i,k} - \boldsymbol{p}_{j,k}$ represents the relative position between UAVs or UAV and target.

      \begin{remark}
          The local frames of the UAVs are assumed to share a common orientation. This can be achieved by aligning all the frames with the Earth's magnetic frame or by consistently orienting the initial camera direction within the Visual-Inertial Odometry (VIO) system.
          It is reasonable to adapt different sensors to various subtasks in target enclosing and tracking. By using VIO or optical flow methods, self-displacement of UAVs can be directly obtained. In addition, distance measurement and communication can be achieved simultaneously via ultrawideband (UWB) sensors \cite{chungSurveyAerialSwarm2018,nguyenPersistentlyExcitedAdaptive2020,guoUltraWidebandOdometryBasedCooperative2020}. Vision-based relative position estimation between UAV and the target is widely studied \cite{liVGSwarmVisionBasedGene2023,tangOnboardDetectionTrackingLocalization2020}.
      \end{remark}

  \section{STATE ESTIMATION} \label{stateEsti}
  In this section, the relative state estimation framework is proposed (Fig.~\ref{fig:estimator}). The relative positions between UAVs are estimated with RLSE at first. Then, an improved DKF is applied by merging the output of the RLSE to achieve persistent cooperative state estimation for the target.

  \subsection{Relative Localization Between UAVs}
  As inspired by \cite{nguyenPersistentlyExcitedAdaptive2020}, the relative displacement and relative distance between UAVs form 
      $$  y_{ij,k} = \boldsymbol{\delta}_{ij,k}^{T}\boldsymbol{p}_{ij,k-1} $$
  where $y_{ij,k} \triangleq \frac{1}{2} \left[ d_{ij,k}^{2}- d_{ij,k-1}^{2}-\left\lVert \boldsymbol{\delta}_{ij,k} \right\rVert ^{2}  \right]$. Therefore, the problem of relative position estimation can be extended to the problem of estimating time-varying parameters. To solve the problem, it is well known by applying the RLSE technique as follows \cite{nguyenPersistentlyExcitedAdaptive2020}.
      \begin{equation}\label{eq:estimator}
              \begin{aligned}
                  & \boldsymbol{\bar{p}}_{ij,k} = \boldsymbol{\hat{p}}_{ij,k-1} + \boldsymbol{\delta}_{ij,k} \\
                  & \varGamma_{ij,k} = \frac{1}{\beta}\left[\varGamma_{ij,k-1} - \frac{\varGamma_{ij,k-1}\boldsymbol{\delta}_{ij,k} \boldsymbol{\delta}_{ij,k}^{T}\varGamma_{ij,k-1}}{\beta+\boldsymbol{\delta}_{ij,k}^{T}\varGamma_{ij,k-1}\boldsymbol{\delta}_{ij,k}}\right] \\
                  & \boldsymbol{\hat{p}}_{ij,k} = \boldsymbol{\bar{p}}_{ij,k} + \varGamma_{ij,k}\boldsymbol{\delta}_{ij,k} \left(y_{ij,k} - \boldsymbol{\delta}_{ij,k}^{T}\boldsymbol{\hat{p}}_{ij,k-1}\right)
              \end{aligned}
      \end{equation}	
  where $\boldsymbol{\hat{p}}_{ij,k}$ is the estimate of the relative position between UAV $i$ and $j$, $0<\beta<1$ is a forgetting factor, and $\varGamma_{ij,k} \in \mathbb{R}^{m\times m}$.
  
      \begin{remark}
      If the relative displacement $\boldsymbol{\delta}_{ij,k}$ satisfies the persistent excitation condition and without considering the measurement noise, the estimation error $ \boldsymbol{\tilde{p}}_{ij,k} \triangleq \boldsymbol{\hat{p}}_{ij,k} - \boldsymbol{p}_{ij,k}$ converges to $0$ exponentially fast \cite{nguyenPersistentlyExcitedAdaptive2020}. In this paper, the measurement noise is considered, and it can be predicted that the estimation error will converge to a bounded value. The influence of measurement noise on formation control is discussed in Section~\ref{Analysis}.
      \end{remark}

     \begin{figure}[tbp]
      \centering
      \includegraphics[width=\linewidth]{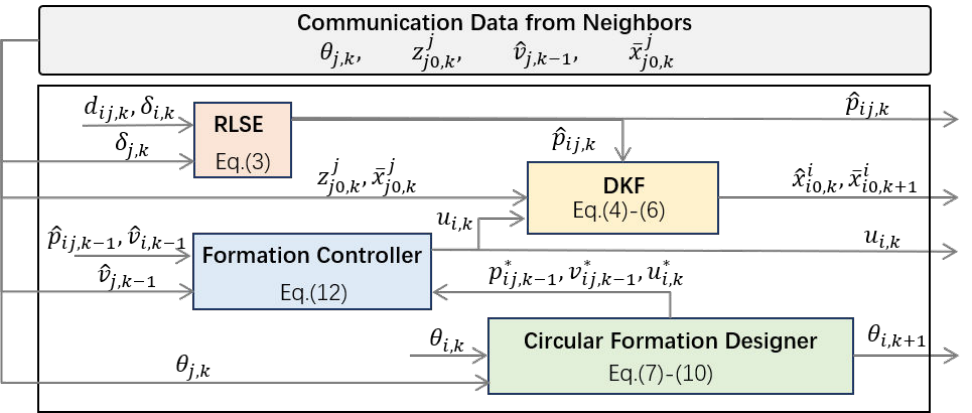}
      \caption{ The overall scheme for the target enclosing and tracking task. The RLSE and DKF methods implement the relative state estimation framework. The circular formation designer creates the desired formation pattern, while the formation controller manages the tracking control.}
      \label{fig:estimator}
    \end{figure}
  
  \subsection{Cooperated Target Motion Estimation}
  By making a difference to the dynamic model of UAV $i$ and target $0$ with $ \boldsymbol{u}_{0,k} \equiv 0 $, the process model is given as  
      $$ \boldsymbol{x}_{i0,k}^{i} = A \boldsymbol{x}_{i0,k-1}^{i} + B \boldsymbol{u}_{i,k-1} + G \boldsymbol{\omega}_{i0,k-1} $$
  where
      $$ \boldsymbol{x}_{i0,k}^{i} \triangleq \begin{bmatrix} \boldsymbol{p}_{i0,k} \\ \boldsymbol{v}_{0,k}^{i} \\  \boldsymbol{v}_{i,k} \end{bmatrix},  A \triangleq \begin{bmatrix} I & -TI & TI \\ \boldsymbol{0} & I & \boldsymbol{0} \\  \boldsymbol{0} & \boldsymbol{0} & I \end{bmatrix}, B = G\triangleq \begin{bmatrix} \boldsymbol{0} \\ \boldsymbol{0} \\  TI \end{bmatrix} $$
  and $ \boldsymbol{\omega}_{i0,k} \backsim \mathcal{N}(0,Q_{i0}) $ is a process noise satisfying a zero-mean Gaussian distribution with covariance matrix $Q_{i0}$. From \eqref{eq:measurements}, the measurement model for UAV $i$ is
      $$ \boldsymbol{z}_{i0,k}^{i} = H \boldsymbol{x}_{i0,k}^{i
      } + \boldsymbol{o}_{i0,k} $$
  where 
      $$\boldsymbol{z}_{i0,k}^{i} \triangleq \begin{bmatrix} \boldsymbol{q}_{i0,k} \\ \boldsymbol{\delta}_{i,k} \end{bmatrix}, H \triangleq \begin{bmatrix} I & \boldsymbol{0} & \boldsymbol{0} \\ \boldsymbol{0} & \boldsymbol{0} & TI \end{bmatrix} $$
  and $ \boldsymbol{o}_{i0,k} \backsim \mathcal{N}(0,R_{i0}) $ is the zero-mean Gaussian noise with covariance matrix $R_{i0}$. The superscript $i$ in $\boldsymbol{x}_{i0,k}^{i},  \boldsymbol{v}_{0,k}^{i}$ and $\boldsymbol{z}_{i0,k}^{i} $ indicates that these are direct estimates or measurements from UAV $i$.

    Then, the prediction step of DKF is given as
        \begin{equation}\label{eq:predictDKF}
              \begin{aligned}
                  & \boldsymbol{\bar{x}}_{i0,k}^{i} = A \boldsymbol{\hat{x}}_{i0,k-1}^{i} + B \boldsymbol{u}_{i,k-1} \\
                  & P_{i,k}^- = A P_{i,k-1}^+ A^T + G Q_{i0} G^T
              \end{aligned}
      \end{equation}
    where $ \boldsymbol{\bar{x}}_{i0,k}^{i} \triangleq \left[ \begin{matrix} (\boldsymbol{\bar{p}}_{i0,k})^T & (\boldsymbol{\bar{v}}_{0,k}^{i})^T & (\boldsymbol{\bar{v}}_{i,k})^T \end{matrix} \right]^T$ and $ \boldsymbol{\hat{x}}_{i0,k}^{i} \triangleq \left[ \begin{matrix} (\boldsymbol{\hat{p}}_{i0,k})^T & (\boldsymbol{\hat{v}}_{0,k}^{i})^T & (\boldsymbol{\hat{v}}_{i,k})^T \end{matrix} \right]^T$ are the priori and posteriori state estimate, respectively; and $ P_{i,k}^-$ and $ P_{i,k}^+$ are the priori and posteriori estimate error covariance, respectively. Typically, the measurements $\boldsymbol{z}_{j0,k}^{j}$ and estimate state $\boldsymbol{\bar{x}}_{j0,k}^{j}$ (or $\boldsymbol{\hat{x}}_{j0,k}^{j}$) from neighbor $j$ will be sent to UAV $i$ and be used directly in the DKF algorithms \cite{olfati-saberDistributedKalmanFiltering2007, liDistributedKalmanFilter2020, wangCooperativeTargetTracking2012}. However, due to the lack of global information, $\boldsymbol{\bar{x}}_{j0,k}^{j}$ and $\boldsymbol{z}_{j0,k}^{j}$ contain relative information with respect to the target in UAV $j$'s local frame. Therefore, this information must be aligned to UAV $i$'s local frame before it can be effectively used. Hence, the output of the RLSE \eqref{eq:estimator} is utilized to generate indirect estimates $\boldsymbol{\bar{x}}_{i0,k}^{j}$ and measurements $\boldsymbol{z}_{i0,k}^{j}$ for UAV $i$ from its neighbor $j$. 
    \begin{equation}\label{eq:z_x_change}
      \boldsymbol{\bar{x}}_{i0,k}^{j} \triangleq \begin{bmatrix} \boldsymbol{\bar{p}}_{j0,k} + \boldsymbol{\hat{p}}_{ij,k} \\ \boldsymbol{\bar{v}}_{0,k}^{j} \\  \boldsymbol{\bar{v}}_{i,k} \end{bmatrix}, \boldsymbol{z}_{i0,k}^{j} \triangleq \begin{bmatrix} \boldsymbol{q}_{j0,k} + \boldsymbol{\hat{p}}_{ij,k} \\ \boldsymbol{\delta}_{i,k} \end{bmatrix}     
     \end{equation}

  Following the correction step in \cite{olfati-saberDistributedKalmanFiltering2007}, there is
      \begin{equation}\label{eq:correctDKF}
              \begin{aligned}
                  \left(P_{i,k}^+ \right)^{-1} &  =  \left(P_{i,k}^-\right)^{-1} +  \sum_{j\in \mathcal{N}_i \cup {i} }  H^T R_{j0}^{-1} H  \\ 
                  \boldsymbol{\hat{x}}_{i0,k}^{i} & = \boldsymbol{\bar{x}}_{i0,k}^{i}  + \epsilon \sum_{j\in \mathcal{N}_i} \left( \boldsymbol{\bar{x}}_{i0,k}^{j} - \boldsymbol{\bar{x}}_{i0,k}^{i} \right) \\
                  & + P_{i,k}^+  \sum_{j\in \mathcal{N}_i \cup {i} } H^T R_{j0}^{-1} \left( \boldsymbol{z}_{i0,k}^{j} - H  \boldsymbol{\bar{x}}_{i0,k}^{i}   \right)
              \end{aligned}
      \end{equation}
  where $ \epsilon > 0 $ is a consensus gain. 
    
      \begin{remark}
       At the time step $k$, UAV $j$ send $\boldsymbol{z}_{j0,k}^{j}$, $\boldsymbol{\bar{x}}_{j0,k}^{j}$ and $\boldsymbol{\hat{v}}_{j,k-1}$ to all of its neighbors (Fig.~\ref{fig:estimator}). The vector $\boldsymbol{z}_{j0,k}^{j}$ contains $\boldsymbol{\delta}_{j,k}$, and $\boldsymbol{\hat{v}}_{j,k-1}$ is one of the components in the previous time estimate $\boldsymbol{\hat{x}}_{j0,k-1}^{j}$. The last term in $\boldsymbol{x}_{i0,k}^{i}$ and $\boldsymbol{z}_{i0,k}^{i}$ can be completely decoupled and is included solely to standardize the architecture. The correction step \eqref{eq:correctDKF} for $\boldsymbol{\hat{x}}_{i0,k}^{i}$ follow the classic paradigm outlined in \cite{olfati-saberDistributedKalmanFiltering2007, liDistributedKalmanFilter2020, wangCooperativeTargetTracking2012},  which is \emph{``posteriori = priori + innovation + consensus"}. Compared with the DKF algorithms proposed in \cite{olfati-saberDistributedKalmanFiltering2007, liDistributedKalmanFilter2020, wangCooperativeTargetTracking2012,lianDistributedKalmanConsensus2022,liSelfLocalizationDistributedSystems2022}, the need for global position information is avoided in this paper by integrating the relative position estimation between UAVs. 
       

      \end{remark}

  
  \section{CIRCULAR FORMATION PATTERN DESIGN}\label{formationDesign}
  In this section, a coupled oscillator-based method for designing the desired circular formation is proposed first. Then, the persistent excitation property of the desired relative velocity is briefly analyzed. 
  
  \subsection{Formation Pattern Planning}
  In this paper, UAVs are expected to circle the target in an evenly distributed pattern. Thus, the desired circular formation is always relative to the target's position $ \boldsymbol{p}_{0,k} $ as the center, with a specified distance $ \rho \in \mathbb{R} $. Assume that the desired circular speed of the UAVs relative to the target is $ v_c \in \mathbb{R}$. Then, there will produce a shift in phase $\Delta \theta$ at each sampling interval with the relation $$ 2 \rho \sin \frac{\Delta \theta}{2} = T v_c$$ This implies that a larger desired radius increases the time it takes for the UAV to orbit the target. To make sure the UAV can track the target, it is assumed that $ v_{maxi} > v_{max0} + v_c $.

  Let $\theta_{i,k} \in [0,2\pi)$ denote the angle of UAV $i$ relative to the $x$-axis in the target's local frame. Then, the desired position of UAV $i$ relative to the target can be uniquely determined as
      \begin{equation}\label{eq:desired_pos}
          \boldsymbol{p}_{i,k}^* = \rho  \left[ \begin{array}{cc} \cos{\theta_{i,k}} \\  \sin{\theta_{i,k}} \\ \end{array} \right]
      \end{equation}
  Further, the discrete-time desired velocity of UAV $i$ is
      \begin{equation}\label{eq:desired_vel}
          \boldsymbol{v}_{i,k}^* = \frac{1}{T} (\boldsymbol{p}_{i,k+1}^* - \boldsymbol{p}_{i,k}^*)
      \end{equation}
  Thus, the desired relative position and the desired relative velocity are denoted by $\boldsymbol{p}_{ij,k}^* \triangleq \boldsymbol{p}_{i,k}^* - \boldsymbol{p}_{j,k}^* $ and $\boldsymbol{v}_{ij,k}^* \triangleq \boldsymbol{v}_{i,k}^* - \boldsymbol{v}_{j,k}^* $, respectively. Specially, $\boldsymbol{p}_{i0,k}^* = \boldsymbol{p}_{i,k}^*$ and 
 $\boldsymbol{v}_{i0,k}^* = \boldsymbol{v}_{i,k}^*$. The desired acceleration is denoted by 
      \begin{equation}\label{eq:desired_acc} \boldsymbol{u}_{i,k+1}^* = \frac{1}{T} (\boldsymbol{v}_{i,k+1}^* - \boldsymbol{v}_{i,k}^*) = \frac{1}{T} (R_{\Delta \theta} - I ) \boldsymbol{v}_{i,k}^*\end{equation} 
  where $ R_{\Delta \theta} \triangleq \begin{bmatrix} \cos \Delta \theta & - \sin \Delta \theta \\ \sin \Delta \theta & \cos \Delta \theta \end{bmatrix} $. The dynamics of $\theta_{i,k}$ will be given by the coupled oscillator model \cite{liuFormationControlMoving2023,dorflerSynchronizationComplexNetworks2014}
      \begin{equation}\label{eq:oscillator}
          \begin{split}
              \theta_{i,k+1} &= \theta_{i,k} + \Delta \theta \\ &+ T\sum_{j = 1}^{n} \sum_{l = 1}^{n}  \frac{K_{l}a_{ij}}{l}\sin{\left(l\left[\theta_{i,k}-\theta_{j,k}\right]\right)}
          \end{split}
      \end{equation}
  where $K_{l}\in\mathbb{R} $ denotes the gains.

         \begin{figure}[tbp]
      \centering
      \includegraphics[width=0.9\linewidth]{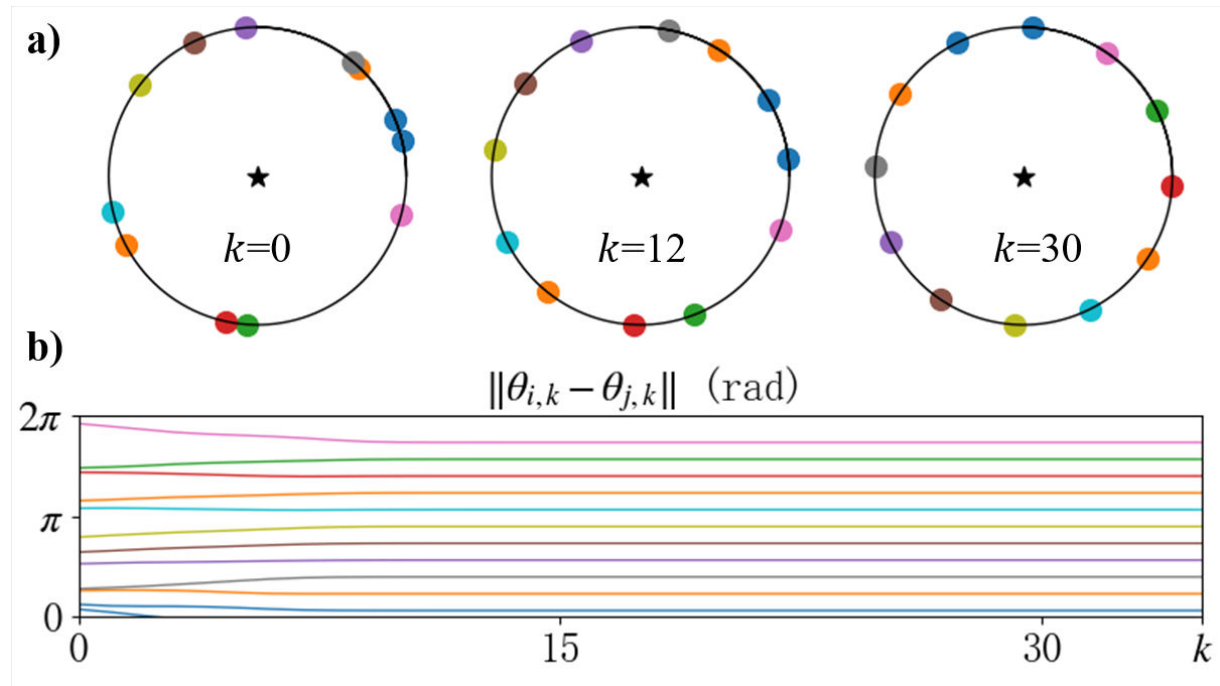}
      \caption{ Illustration of the circular formation pattern design model. a) A demonstration with 12 UAVs (circles) achieves the desired circular formation around the target (star) using the coupled oscillator model. b) The relative phase converges to a fixed value.}
      \label{fig:osc}
    \end{figure}
      \begin{remark}
      By choosing appropriate parameters, the desired circular formation can be achieved \cite{liuFormationControlMoving2023,dorflerSynchronizationComplexNetworks2014}. When the system reaches equilibrium, the relative phase between each drone will remain at a fixed value, as shown in Fig.~\ref{fig:osc}. Formally, there is $\lim_{k\rightarrow \infty} \| \theta_{i,k}-\theta_{j,k} \| = \varphi_{ij} $. Since the UAVs are evenly distributed around the circumference, $ \varphi_{ij} = 2\pi l /n, l=\pm 1,...,\pm n-1 $.
      
      The coupled oscillator-based formation pattern has the following advantages. Firstly, it allows the UAVs to form desired formations autonomously without the need to specify configurations in advance. Secondly, it can respond flexibly to conditions of reducing and increasing of UAVs as shown in our previous work \cite{liuFormationControlMoving2023}. Most importantly, the persistent excitation nature of the circular pattern will sufficiently guarantee the convergence of the RLSE, which will be discussed below. In addition, the formation can be changed counterclockwise or clockwise by changing the plus sign to the minus sign of $\Delta\theta$ in (\ref{eq:oscillator}).
      
      \end{remark}
  
         \begin{figure}[tbp]
      \centering
      \includegraphics[width=\linewidth]{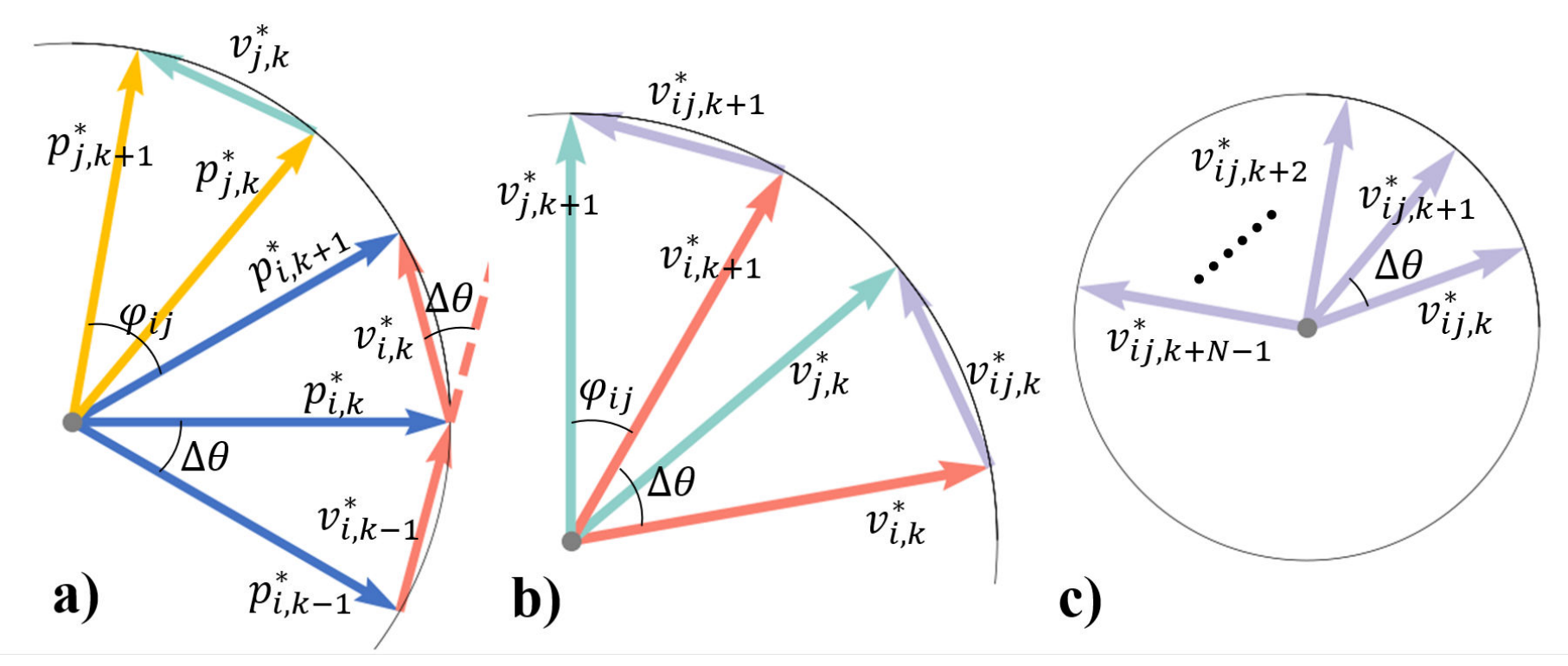}
      \caption{Illustration of the persistent excitation analysis for $\boldsymbol{v}_{ij}^*$. All vectors are mapped on the unit circle. a) The relation between $ \boldsymbol{p}_i^* $, $ \boldsymbol{p}_j^* $, $ \boldsymbol{v}_i^* $ and $ \boldsymbol{v}_j^* $. b) The relation between $ \boldsymbol{v}_i^* $, $ \boldsymbol{v}_j^* $ and $ \boldsymbol{v}_{ij}^*$. c) Over $N$ time steps, $ \boldsymbol{v}_{ij}^*$ are shown be non-collinear.} 
      \label{fig:PE}
    \end{figure}
  
  \subsection{Persistent Excitation Analysis}
  Intuitive analysis is given through geometric relations (Fig.~\ref{fig:PE}). Without prejudice to the conclusion, it is assumed that all vector changes are counterclockwise on a unit circle. It can be predicted that after a limited time step $k \geq K \in \mathbb{N}$, the coupled oscillator-based system (\ref{eq:oscillator}) will reach balance (Fig.~\ref{fig:osc}). This is mainly reflected in two aspects. First, $  \boldsymbol{p}_i^* $ changes $ \Delta \theta$ phase at each time step ($k$ is omitted without affecting understanding). Second, the relative phase between $  \boldsymbol{p}_j^* $ and $  \boldsymbol{p}_j^* $ will keep a constant value $ \varphi_{ij} \ne 0$. Thus, $ \boldsymbol{v}_i^* $ also changes $ \Delta \theta$ phase at each time step and keeps a constant phase $ \varphi_{ij}$ with $ \boldsymbol{v}_j^* $.   
  
  Hence, the desired relative velocity $ \boldsymbol{v}_{ij}^* $ also shifts the phase of $ \Delta \theta$ on the unit circle at each time step. If $ \Delta \theta \leq 2\pi/3 $, it is definitely to find $ \boldsymbol{v}_{ij,l_1}^* $ and $ \boldsymbol{v}_{ij,l_2}^* $ that's not collinear, where $ K\leq l_1<l_2 \in \mathbb{N}$. In fact, considering the performance constraints of UAVs, it is usually $ \Delta \theta \ll 2\pi/3 $. Formally, for any $\left(i,j\right)\in\mathcal{E}_f$, there exist $\alpha_{ij,2}\geq\alpha_{ij,1}>0$ and $K,N\in \mathbb{N} $ such that $\forall l\geq K$, $\boldsymbol{v}_{ij,k}^*$ satisfy the persistent excitation condition as follows
      \begin{equation}\label{eq:PE_v}
        \alpha_{ij,1}I \leqslant  \sum_{k=l}^{l+N-1}{\boldsymbol{v}_{ij,k}^* {\boldsymbol{v}_{ij,k}^*}^{T} }\leqslant \alpha_{ij,2}I
      \end{equation}

  
  \section{TARGET ENCLOSING AND TRACKING CONTROL}\label{formationControl}
  The distributed formation controller for simultaneously enclosing and tracking a moving target is proposed as follows.
      \begin{equation}\label{eq:formationtrackingcontrol}
          \left\{
               \begin{aligned}
                &\boldsymbol{u}_{i,k}^{trac} \triangleq -k_p \sum_{j\in\mathcal{N}_i\cup 0} a_{ij}\left({\hat{\boldsymbol{p}}}_{ij,k-1}-\boldsymbol{p}_{ij,k-1}^*\right) \\
                & \quad -k_v \sum_{j\in\mathcal{N}_i\cup 0} a_{ij}\left( {\hat{\boldsymbol{v}}}_{i,k-1} - {\hat{\boldsymbol{v}}}_{j,k-1} - \boldsymbol{v}_{ij,k-1}^* \right)  \\
                &\boldsymbol{u}_{i,k} = \pi_{U_{trac}}\left( \boldsymbol{u}_{i,k}^{trac} \right) + \boldsymbol{u}_{i,k}^*, i\in\mathcal{V}_f
                \end{aligned}
          \right.
      \end{equation}
  where $ \pi_{U}\left( \boldsymbol{u} \right) \triangleq \left(\min \{U,\| \boldsymbol{u} \|\} / \| \boldsymbol{u} \| \right) \boldsymbol{u}$ project $\boldsymbol{u}$ into bounded region, $U_{trac}>0$ is the upper bound, and $k_p,k_v>0$ are positive parameters. 
  
  \begin{remark} As shown in Fig.~\ref{fig:estimator}, at time step $k$, $\theta_{j,k}$ will be sent to UAV $i$ to calculate $\boldsymbol{p}_{ij,k-1}^*$ and $\boldsymbol{v}_{ij,k-1}^*$. Meanwhile, ${\hat{\boldsymbol{v}}}_{j,k-1}$, an element of $ \boldsymbol{\hat{x}}_{j0,k-1}^{j} $, is transmitted to UAV $i$ during the execution of the DKF algorithm.
  Limiting the upper bound avoids excessive control input when the estimation error is large, ensuring safety during the task.
  \end{remark}

  \section{THEORETICAL ANALYSIS}\label{Analysis}
  In this section, it is firstly shown that RLSE converges under the controller (\ref{eq:formationtrackingcontrol}). Then, performance analysis for a target moving at constant velocity and varying velocity are given respectively.
  Before the analysis is presented, some useful lemmas are given at first. 
  
      \begin{lemma}[\cite{gantmakher2000theory}]\label{lemma:det}
          Let $ A,B,C,D \in \mathbb{R}^{n\times n}, M = \begin{bmatrix} A & B \\ C & D\end{bmatrix}$. Then $ \det(M)=\det(AD-BC)$, if $ A,B,C$ and $D$ commute pairwise.
      \end{lemma}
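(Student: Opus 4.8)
The plan is to reduce the $2\times 2$ block determinant to an ordinary $n\times n$ determinant through a Schur-complement-type factorization that exploits the commutativity, and then to pass from the case where $D$ is invertible to the general case by a polynomial perturbation argument.

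First I would assume $D$ is invertible. The key observation is the matrix identity
$$\begin{bmatrix} A & B \\ C & D \end{bmatrix}\begin{bmatrix} D & \boldsymbol{0} \\ -C & I \end{bmatrix} = \begin{bmatrix} AD - BC & B \\ CD - DC & D \end{bmatrix} = \begin{bmatrix} AD - BC & B \\ \boldsymbol{0} & D \end{bmatrix},$$
where the last equality uses $CD = DC$. The second factor on the left is block lower-triangular with diagonal blocks $D$ and $I$, and the matrix on the right is block upper-triangular with diagonal blocks $AD-BC$ and $D$; taking determinants of both sides therefore gives $\det(M)\,\det(D) = \det(AD - BC)\,\det(D)$. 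Dividing through by $\det(D)\neq 0$ yields $\det(M) = \det(AD-BC)$ in this case.

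Next I would remove the invertibility hypothesis. For $t\in\mathbb{R}$ set $D_t \triangleq D + tI$. Since $tI$ commutes with everything, $D_t$ still commutes with $A$, $B$, and $C$, and $D_t$ is invertible for every $t$ outside the finite set of eigenvalues of $-D$. Applying the previous case to $\begin{bmatrix} A & B \\ C & D_t \end{bmatrix}$ shows that its determinant equals $\det(AD_t - BC)$ for all but finitely many $t$. Both quantities are polynomial functions of $t$ (the entries are affine in $t$), so the identity, holding at infinitely many points, holds identically; evaluating at $t = 0$ recovers $\det(M) = \det(AD - BC)$.

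The proof is otherwise routine; the step I would be most careful about is this passage from invertible to singular $D$ — in particular, recording that the shifted block $D_t$ inherits the commutation relations it needs and that two polynomials agreeing at infinitely many real points must coincide. I would also note that, although the lemma assumes $A,B,C,D$ commute pairwise, the argument in fact uses only $CD = DC$.
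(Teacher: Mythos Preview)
Your argument is correct: the block factorization exploiting $CD=DC$ together with the polynomial continuation in $t$ is the standard route to this identity, and your remark that only $CD=DC$ is actually needed is accurate. There is nothing to compare against here, however, since the paper does not supply its own proof of this lemma; it is quoted as a known result from Gantmakher's \emph{The Theory of Matrices} and used as a black box in the proof of Theorem~\ref{them:constant_target}.
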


      \begin{lemma}[\cite{bishop2011modern}]\label{lemma:routh}
          Let the characteristic polynomial of a second-order system is
              $$ q(s) = a_2 s^2 + a_1 s + a_0 $$ 
          The requirement for a stable second-order system is simply that all the coefficients be positive or all the coefficients be negative.
      \end{lemma}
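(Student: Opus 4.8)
The plan is to reduce the claim to an elementary statement about the roots of a real quadratic, which can be read off directly from the quadratic formula together with Vieta's relations. First I would normalize the sign: since the roots of $q(s)$ and of $-q(s)$ coincide, and the hypothesis ``all coefficients positive or all negative'' is invariant under multiplication by $-1$, it suffices to treat the case $a_2>0$. Dividing through by $a_2$ (which does not change the roots) reduces the problem to the monic polynomial $s^2+ps+r$ with $p=a_1/a_2$ and $r=a_0/a_2$, and the target claim becomes: $s^2+ps+r$ is Hurwitz, i.e. both roots lie in the open left half-plane, if and only if $p>0$ and $r>0$.

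Next I would establish the characterization that both roots of a monic real quadratic have negative real part if and only if the sum of the roots is negative and their product is positive, which by Vieta's formulas reads $-p<0$ and $r>0$. For this I would split into two cases according to the discriminant $p^2-4r$. If $p^2\ge 4r$ the roots are real, and two real numbers are both negative exactly when their sum is negative and their product is positive. If $p^2<4r$ the roots form a complex-conjugate pair $\alpha\pm i\beta$ with $\beta\neq 0$; then the product $\alpha^2+\beta^2$ is automatically positive, the sum equals $2\alpha$, and ``sum negative'' is precisely ``$\alpha<0$'', i.e. precisely the Hurwitz condition. In both cases ``sum $<0$ and product $>0$'' is equivalent to Hurwitz stability, so $s^2+ps+r$ is stable iff $p>0$ and $r>0$. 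Translating back, $p>0$ and $r>0$ means $a_1$ and $a_0$ share the sign of $a_2$; combined with the normalization $a_2>0$ this is exactly ``$a_0,a_1,a_2$ all positive'', and removing the normalization yields ``all positive or all negative'', which proves the lemma.

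I do not anticipate a genuine obstacle here: the only points requiring a little care are the sign/normalization bookkeeping and ensuring that in the complex-root case the product condition is seen to be automatically satisfied rather than an extra requirement, together with the boundary checks $r=0$ (root at the origin) and $p=0,\,r>0$ (roots on the imaginary axis), both of which are correctly excluded by the strict inequalities. An alternative route would be to invoke the general Routh--Hurwitz array and specialize to $n=2$, but the direct argument above is shorter and self-contained.
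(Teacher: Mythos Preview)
Your argument is correct and complete. The paper does not supply its own proof of this lemma; it is quoted from the cited textbook \cite{bishop2011modern} and used as a black box (specifically, the Routh--Hurwitz criterion specialized to degree two). Your elementary derivation via Vieta's relations and a discriminant case split is a perfectly valid, self-contained replacement for that citation, and in fact slightly more informative than simply invoking the Routh array, since it makes transparent why the strict sign conditions exclude exactly the boundary cases $r=0$ and $p=0$.
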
 
  
      \begin{lemma}[\cite{liesen2015linear}]\label{lemma:Kronecker}
          Let $ A,B \in \mathbb{R}^{n\times n}$, and suppose $ A \boldsymbol{u} = \lambda \boldsymbol{u}, B \boldsymbol{v} = \mu \boldsymbol{v} $. Then $ (A \otimes B)( \boldsymbol{u} \otimes \boldsymbol{v}) = (\lambda \mu)(\boldsymbol{u} \otimes \boldsymbol{v}) $, where $ \lambda$ and $\mu$ denote the eigenvalues of $A$ and $B$, and $\boldsymbol{u}, \boldsymbol{v}$ are the eigenvectors of $A$ and $B$, respectively. 
      \end{lemma}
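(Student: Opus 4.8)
The plan is to obtain the claim as an immediate consequence of the \emph{mixed-product property} of the Kronecker product, namely that $(A\otimes B)(C\otimes D) = (AC)\otimes(BD)$ whenever the ordinary products $AC$ and $BD$ are defined. First I would establish (or simply invoke) this identity via a block-matrix computation: writing $A\otimes B$ as the array of blocks whose $(p,q)$ block equals $a_{pq}B$, and $C\otimes D$ as the array whose $(q,r)$ block equals $c_{qr}D$, the $(p,r)$ block of their product is $\sum_{q} a_{pq}c_{qr}\,BD = (AC)_{pr}\,(BD)$, which is precisely the $(p,r)$ block of $(AC)\otimes(BD)$.

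Next I would apply this identity with $C=\boldsymbol{u}$ and $D=\boldsymbol{v}$, regarding the eigenvectors as $n\times 1$ matrices so that the products $A\boldsymbol{u}$ and $B\boldsymbol{v}$ are well defined. This gives $(A\otimes B)(\boldsymbol{u}\otimes\boldsymbol{v}) = (A\boldsymbol{u})\otimes(B\boldsymbol{v})$. Substituting the eigenvalue relations $A\boldsymbol{u}=\lambda\boldsymbol{u}$ and $B\boldsymbol{v}=\mu\boldsymbol{v}$ yields $(\lambda\boldsymbol{u})\otimes(\mu\boldsymbol{v})$, and finally pulling the scalars out using the homogeneity $(\alpha\boldsymbol{x})\otimes(\beta\boldsymbol{y}) = \alpha\beta(\boldsymbol{x}\otimes\boldsymbol{y})$ delivers $\lambda\mu(\boldsymbol{u}\otimes\boldsymbol{v})$, which is the desired identity.

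There is essentially no genuine obstacle in this argument; the only point requiring a moment's care is conformability, i.e.\ checking that the mixed-product identity still applies when one of the right-hand factors is a column vector rather than a square matrix. Since a vector in $\mathbb{R}^n$ is literally an $n\times 1$ matrix, the block computation above goes through verbatim, and the lemma follows without further work, which is why it is simply quoted from \cite{liesen2015linear}.
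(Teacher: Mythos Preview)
Your argument is correct and is the standard proof of this classical identity. The paper does not actually prove this lemma at all; it merely cites it from \cite{liesen2015linear}, so your proposal supplies a valid proof where the paper offers none.
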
 
  
  \subsection{Convergence Analysis of RLSE}
      As mentioned in \cite{nguyenPersistentlyExcitedAdaptive2020}, to ensure the convergence of RLSE, $ \boldsymbol{\delta}_{ij,k} $ should be persistently exciting. 
      \begin{theorem}\label{them:rlsel}
          Under \eqref{eq:PE_v} and \eqref{eq:formationtrackingcontrol}, for any $\left(i,j\right)\in\mathcal{E}_f$, there exist $\gamma_{ij,2}\geq\gamma_{ij,1}>0$ and $K,N\in \mathbb{N}$, for all $l\geq K$ such that $\boldsymbol{\delta}_{ij,k}$ is strongly persistently exciting,
          \begin{equation}\label{eq:PE}
              \gamma_{ij,1}I \leqslant \boldsymbol{\Phi}_{ij,l}\triangleq\sum_{k=l}^{l+N-1}{\boldsymbol{\delta}_{ij,k}\boldsymbol{\delta}_{ij,k}^{T}}\leqslant \gamma_{ij,2}I
          \end{equation}	
      \end{theorem}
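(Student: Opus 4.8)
The plan is to relate the measured relative displacement $\boldsymbol{\delta}_{ij,k}$ to the \emph{desired} relative velocity $\boldsymbol{v}_{ij,k-1}^*$, for which the excitation bound \eqref{eq:PE_v} is already in hand, and then show that the discrepancy between the two is uniformly small enough not to destroy the excitation. First I would use the double-integrator model \eqref{eq:dynamic}, which gives $\boldsymbol{p}_{i,k}-\boldsymbol{p}_{i,k-1}=T\boldsymbol{v}_{i,k-1}$, together with \eqref{eq:measurements} to obtain
\begin{equation}\label{eq:sketch_delta}
\boldsymbol{\delta}_{ij,k}=T\boldsymbol{v}_{ij,k-1}+\boldsymbol{o}_{1,i,k}-\boldsymbol{o}_{1,j,k}.
\end{equation}
Since the definition \eqref{eq:desired_vel} yields $\boldsymbol{p}_{ij,k}^*-\boldsymbol{p}_{ij,k-1}^*=T\boldsymbol{v}_{ij,k-1}^*$, I then split $\boldsymbol{v}_{ij,k-1}=\boldsymbol{v}_{ij,k-1}^*+\tilde{\boldsymbol{v}}_{ij,k-1}$ with $\tilde{\boldsymbol{v}}$ the relative velocity tracking error, so that $\boldsymbol{\delta}_{ij,k}=T\boldsymbol{v}_{ij,k-1}^*+\boldsymbol{\eta}_{ij,k}$, where $\boldsymbol{\eta}_{ij,k}\triangleq T\tilde{\boldsymbol{v}}_{ij,k-1}+\boldsymbol{o}_{1,i,k}-\boldsymbol{o}_{1,j,k}$ collects the tracking error and the odometry noise.

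The upper bound is routine: boundedness of the velocities ($\|\boldsymbol{v}_{i,k}\|\le v_{maxi}$), of the desired speed $v_c$, and of the noise covariances gives $\|\boldsymbol{\delta}_{ij,k}\|\le c$ for a constant $c$, hence $\boldsymbol{\Phi}_{ij,l}\preceq Nc^2 I$ and one may take $\gamma_{ij,2}=Nc^2$. For the lower bound I would expand $\boldsymbol{\Phi}_{ij,l}=\sum_{k=l}^{l+N-1}\boldsymbol{\delta}_{ij,k}\boldsymbol{\delta}_{ij,k}^{T}$ into the ``clean'' part $T^2\sum_k\boldsymbol{v}_{ij,k-1}^*{\boldsymbol{v}_{ij,k-1}^*}^{T}$, a cross term, and $\sum_k\boldsymbol{\eta}_{ij,k}\boldsymbol{\eta}_{ij,k}^{T}$. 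By \eqref{eq:PE_v} the clean part is at least $T^2\alpha_{ij,1}I$ for every $l\ge K+1$, while the cross and quadratic terms are bounded in norm by $2TNv^{*}_{\max}\bar\eta+N\bar\eta^{2}$ with $\bar\eta\ge\sup_k\|\boldsymbol{\eta}_{ij,k}\|$ and $v^{*}_{\max}\ge\|\boldsymbol{v}_{ij,k}^*\|$. Therefore, once $\bar\eta$ is known to be small enough that $T^2\alpha_{ij,1}-2TNv^{*}_{\max}\bar\eta-N\bar\eta^{2}>0$, one sets $\gamma_{ij,1}$ equal to this quantity and enlarges $K$ so that the bound on $\bar\eta$ holds over every window, which establishes \eqref{eq:PE}.

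The crux, which I expect to be the main obstacle, is showing that $\bar\eta$ --- equivalently the relative velocity tracking error $\tilde{\boldsymbol{v}}_{ij,k}$ --- is ultimately small. This is subtle because the controller \eqref{eq:formationtrackingcontrol} feeds back the RLSE/DKF estimates $\hat{\boldsymbol{p}}_{ij,k-1},\hat{\boldsymbol{v}}_{i,k-1}$, whose accuracy relies on the very persistent excitation being established here, so the argument has to be closed simultaneously. I would treat the estimation error and the formation error as an interconnected, ultimately bounded system: because the feedforward term $\boldsymbol{u}_{i,k}^{*}$ exactly reproduces the desired acceleration \eqref{eq:desired_acc}, the tracking-error dynamics are driven only by the saturated feedback evaluated at the estimation error, plus process noise; a small-gain / bootstrap argument (tuning $\epsilon,k_p,k_v$ and invoking the exponential forgetting of the RLSE from \cite{nguyenPersistentlyExcitedAdaptive2020}) then furnishes a uniform bound $\bar\eta$ that can be driven below the threshold above. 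A last technicality is that \eqref{eq:PE} is deterministic whereas $\boldsymbol{\eta}_{ij,k}$ contains Gaussian terms, so the claim is read in the noise-free setting emphasized by the surrounding remarks, or in an expected / high-probability sense, with the noise contribution folded into $\bar\eta$.
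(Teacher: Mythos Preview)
Your decomposition $\boldsymbol{\delta}_{ij,k}=T\boldsymbol{v}_{ij,k-1}^*+\boldsymbol{\eta}_{ij,k}$ pushes the entire burden onto the relative \emph{velocity tracking error}, which is why you end up with the circular dependency you flag (PE $\Rightarrow$ RLSE convergence $\Rightarrow$ tracking $\Rightarrow$ small $\bar\eta$ $\Rightarrow$ PE). The paper's proof avoids this loop altogether by a different decomposition: it advances one more step through the dynamics and the controller \eqref{eq:formationtrackingcontrol} to write
\[
\boldsymbol{\delta}_{ij,k+1}=\boldsymbol{c}_{ij,k-1}+T\,R_I\,\boldsymbol{v}_{ij,k-1}^*,\qquad R_I\triangleq R_{\Delta\theta}-I,
\]
where $\boldsymbol{c}_{ij,k-1}=T\boldsymbol{v}_{ij,k-1}+T^2\bigl[\pi_{U_{trac}}(\boldsymbol{u}_{i}^{trac})-\pi_{U_{trac}}(\boldsymbol{u}_{j}^{trac})\bigr]$. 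The point is that $\|\boldsymbol{c}_{ij}\|\le C$ follows \emph{immediately} from the standing velocity bound $\|\boldsymbol{v}_{i,k}\|\le v_{maxi}$ and the saturation $\pi_{U_{trac}}(\cdot)$ built into the control law---no estimate quality, no tracking convergence, no small-gain argument is invoked. The PE ``signal'' is the feedforward piece $TR_I\boldsymbol{v}_{ij}^*$ (coming from $\boldsymbol{u}_{i,k}^*$ via \eqref{eq:desired_acc}), and since $R_I$ is a fixed invertible matrix the PE of $\boldsymbol{v}_{ij}^*$ in \eqref{eq:PE_v} transfers directly; the remainder of the argument is delegated to \cite[Theorem~1]{liuFormationControlMoving2023}.

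So the gap in your proposal is not a mathematical error but a missed structural observation: the saturation in \eqref{eq:formationtrackingcontrol} is precisely what decouples the PE proof from the estimator/tracking analysis. Once you use it, the ``crux'' you identify and the whole bootstrap machinery become unnecessary. Your route could perhaps be made to work, but it is substantially harder than what the paper actually does, and the smallness condition you need on $\bar\eta$ is never required.
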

  
      \begin{proof}
          From \eqref{eq:dynamic}, \eqref{eq:desired_acc} and \eqref{eq:formationtrackingcontrol}, it can be obtained that $$ \delta_{ij,k+1} = \boldsymbol{c}_{ij,k-1}+TR_I \boldsymbol{v}_{ij,k-1}^* $$  where 
          $  \boldsymbol{c}_{ij,k-1} = T \boldsymbol{v}_{ij,k-1} + T^2 [ \pi_{U_{trac}}\left( \boldsymbol{u}_{i,k+1}^{trac} \right) - \pi_{U_{trac}}\left( \boldsymbol{u}_{j,k+1}^{trac} \right) ] $ and $ R_I \triangleq R_{\Delta \theta} - I $. Since $ \Delta \theta$ is constant, $\| R_I \| \equiv D >0$. Due to $\boldsymbol{u}_{i,k}^{trac}$ is bounded, there is $ \|\boldsymbol{c}_{ij}\|\leq C$, where $ 0< C$ is a constant value. In addition, $\| \boldsymbol{v}_{ij}^* \| \leq 2 v_c $. Follow the similar lines as in \cite[Theorem 1]{liuFormationControlMoving2023}, it can find $\gamma_{ij,1}= (C-2TDv_c)^2$ and $\gamma_{ij,2}= (C+2TDv_c)^2$. Thus, the persistent excitation of $\boldsymbol{\delta}_{ij,k}$ in \eqref{eq:PE} can be obtained.

      \end{proof}

  \subsection{Target Tracking Analysis at a Constant Velocity}
    The target tracking error is defined as
      \begin{equation}\label{eq:enclosing_goal}
                   \boldsymbol{e}_{t,k} \triangleq  \begin{bmatrix}  \bar{\boldsymbol{p}}_{k} - \boldsymbol{p}_{0,k} \\ \bar{\boldsymbol{v}}_{k} - \boldsymbol{v}_{0,k}  \end{bmatrix} 
     \end{equation}
              where $$ \bar{\boldsymbol{p}}_{k} = \frac{1}{n}\sum_{i=1}^n \boldsymbol{p}_{i,k}, \quad  \bar{\boldsymbol{v}}_{k}= \frac{1}{n}\sum_{i=1}^n \boldsymbol{v}_{i,k}, \quad i\in\mathcal{V}_f $$
              represent the position and velocity of the geometric center of the formation, respectively.
      \begin{theorem}\label{them:constant_target}
          Suppose Assumption \ref{assump:graph} holds, and select 
              \begin{equation}\label{eq:kpkv}
                  k_p T < k_v < \frac{4}{\lambda_{n} T U_{trac}}
              \end{equation}        
          If the target is moving at a constant speed, \emph{i.e.}, $ \boldsymbol{u}_{0,k} \equiv 0 $, the enclosing tracking error $ \boldsymbol{e}_{t,k} $ converges to 0 under the tracking control law (\ref{eq:formationtrackingcontrol}) in the absence of noise.
      \end{theorem}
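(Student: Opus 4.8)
The plan is to pass to \emph{enclosing-error} coordinates, show that the feed-forward acceleration $\boldsymbol{u}^{*}$ together with the constant target velocity cancels all the drift in those coordinates, use the convergence of the RLSE \eqref{eq:estimator} and of the DKF to turn the distributed feedback in \eqref{eq:formationtrackingcontrol} into true-state feedback through the reduced Laplacian $\mathcal{L}_f$, and then reduce the resulting linear error system, by spectral decomposition, to $n$ decoupled second-order subsystems whose Schur stability is exactly what \eqref{eq:kpkv} buys. First I would set, for $i\in\mathcal{V}_f$, $\tilde{\boldsymbol{p}}_{i,k}\triangleq\boldsymbol{p}_{i,k}-\boldsymbol{p}_{0,k}-\boldsymbol{p}^{*}_{i,k}$ and $\tilde{\boldsymbol{v}}_{i,k}\triangleq\boldsymbol{v}_{i,k}-\boldsymbol{v}_{0,k}-\boldsymbol{v}^{*}_{i,k}$, stacked into $\tilde{\boldsymbol{p}}_{k},\tilde{\boldsymbol{v}}_{k}\in\mathbb{R}^{2n}$. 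Substituting \eqref{eq:dynamic} (with $\boldsymbol{u}_{0,k}\equiv 0$), \eqref{eq:desired_vel}, \eqref{eq:desired_acc} and \eqref{eq:formationtrackingcontrol} shows that $\tilde{\boldsymbol{p}}_{i,k+1}=\tilde{\boldsymbol{p}}_{i,k}+T\tilde{\boldsymbol{v}}_{i,k}$ holds exactly and that, once the feed-forward term compensates the desired motion, $\tilde{\boldsymbol{v}}_{i,k+1}=\tilde{\boldsymbol{v}}_{i,k}+T\,\pi_{U_{trac}}(\boldsymbol{u}^{trac}_{i,k})$. The structural point is that the target enters \eqref{eq:formationtrackingcontrol} as node $0$, whose error is identically zero and for which $\boldsymbol{p}^{*}_{i0,k}=\boldsymbol{p}^{*}_{i,k}$, $\boldsymbol{v}^{*}_{i0,k}=\boldsymbol{v}^{*}_{i,k}$; hence each edge term $a_{ij}(\hat{\boldsymbol{p}}_{ij,k}-\boldsymbol{p}^{*}_{ij,k})$ (and the analogous velocity term) equals $a_{ij}(\tilde{\boldsymbol{p}}_{i,k}-\tilde{\boldsymbol{p}}_{j,k})$ once the estimates are exact, and the sum over $j\in\mathcal{N}_i\cup\{0\}$ is precisely the $i$-th block of $(\mathcal{L}_f\otimes I)\tilde{\boldsymbol{p}}_k$.

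Next I would use the estimators. In the noise-free case the persistent excitation of $\boldsymbol{\delta}_{ij,k}$ proved in Theorem~\ref{them:rlsel} makes the RLSE error $\hat{\boldsymbol{p}}_{ij,k}-\boldsymbol{p}_{ij,k}$ decay exponentially; the noise-free DKF \eqref{eq:predictDKF}--\eqref{eq:correctDKF} is a stable observer for the jointly observable target--UAV model, so $\hat{\boldsymbol{v}}_{i,k}\to\boldsymbol{v}_{i,k}$ and $\hat{\boldsymbol{v}}^{i}_{0,k}\to\boldsymbol{v}_{0,k}$. Consequently $\boldsymbol{u}^{trac}_{i,k}$ equals the $i$-th block of $-k_p(\mathcal{L}_f\otimes I)\tilde{\boldsymbol{p}}_{k}-k_v(\mathcal{L}_f\otimes I)\tilde{\boldsymbol{v}}_{k}$ up to an exponentially vanishing perturbation, and on a forward-invariant neighbourhood of the origin $\|\boldsymbol{u}^{trac}_{i,k}\|\le U_{trac}$, so the saturation $\pi_{U_{trac}}$ is inactive and the stacked error obeys $\begin{bmatrix}\tilde{\boldsymbol{p}}_{k+1}\\\tilde{\boldsymbol{v}}_{k+1}\end{bmatrix}=\mathcal{M}\begin{bmatrix}\tilde{\boldsymbol{p}}_{k}\\\tilde{\boldsymbol{v}}_{k}\end{bmatrix}+\boldsymbol{d}_k$ with $\mathcal{M}=\begin{bmatrix}I & TI\\-k_pT(\mathcal{L}_f\otimes I) & I-k_vT(\mathcal{L}_f\otimes I)\end{bmatrix}$ and $\boldsymbol{d}_k\to 0$.

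Then I would reduce and conclude. By Lemma~\ref{lemma:eigenvalue_L}, $\mathcal{L}_f$ is real symmetric positive definite; diagonalizing it by an orthogonal change of coordinates and invoking Lemma~\ref{lemma:Kronecker} decouples $\mathcal{M}$ into $n$ planar second-order blocks, the $\lambda_i$-block having characteristic polynomial $z^2+(k_vT\lambda_i-2)z+(1-k_vT\lambda_i+k_pT^2\lambda_i)$, the block determinant being routine via Lemma~\ref{lemma:det}. Applying the bilinear map $z=(1+s)/(1-s)$ converts Schur stability into positivity of the coefficients of the resulting quadratic in $s$, so Lemma~\ref{lemma:routh} applies; checking these inequalities over all $\lambda_i\in(0,\lambda_n]$ shows they are guaranteed by \eqref{eq:kpkv} (the condition $k_pT<k_v$ handles the product-of-roots/$|a_0|<1$ requirement, and the upper bound on $k_v$ involving $\lambda_n$ and $U_{trac}$ gives enough margin for the worst-case mode and for keeping the saturation off). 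Hence $\mathcal{M}$ is Schur, and a vanishing-perturbation/ISS argument yields $\tilde{\boldsymbol{p}}_k,\tilde{\boldsymbol{v}}_k\to 0$. Finally $\bar{\boldsymbol{p}}_k-\boldsymbol{p}_{0,k}=\tfrac1n\sum_i(\tilde{\boldsymbol{p}}_{i,k}+\boldsymbol{p}^{*}_{i,k})$ and $\bar{\boldsymbol{v}}_k-\boldsymbol{v}_{0,k}=\tfrac1n\sum_i(\tilde{\boldsymbol{v}}_{i,k}+\boldsymbol{v}^{*}_{i,k})$; the first term in each tends to $0$, while $\tfrac1n\sum_i\boldsymbol{p}^{*}_{i,k}\to 0$ and $\tfrac1n\sum_i\boldsymbol{v}^{*}_{i,k}\to 0$ because the coupled oscillator \eqref{eq:oscillator} drives the phases to the evenly spaced configuration $\varphi_{ij}=2\pi l/n$, so $\boldsymbol{e}_{t,k}\to 0$ in \eqref{eq:enclosing_goal}.

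The linear-algebra core in the last step is mechanical; the delicate part is the first two steps, namely making rigorous the cascade in which the oscillator transient, the RLSE error and the DKF error all feed a \emph{saturated} error system. The main obstacle is to exhibit a forward-invariant region on which $\pi_{U_{trac}}$ never engages (so that the linear analysis is legitimate) and to verify that the three vanishing disturbances do not expel the state from that region before convergence sets in; care is also needed with the one-step information delay in \eqref{eq:formationtrackingcontrol}, which must either be shown to preserve Schur stability under \eqref{eq:kpkv} or be absorbed into $\boldsymbol{d}_k$.
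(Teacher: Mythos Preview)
Your plan mirrors the paper's proof almost step for step: the same error coordinates $\boldsymbol{e}_{pi,k}=\boldsymbol{p}_{i0,k}-\boldsymbol{p}^{*}_{i0,k}$, $\boldsymbol{e}_{vi,k}=\boldsymbol{v}_{i0,k}-\boldsymbol{v}^{*}_{i0,k}$, the same reduction to a linear recursion driven by vanishing estimator errors $\boldsymbol{\tilde e}_k$, the same block--determinant/bilinear--transform/Routh route through Lemmas~\ref{lemma:eigenvalue_L}--\ref{lemma:Kronecker}, and the same vanishing--perturbation endgame via $\|\boldsymbol{e}_{k+1}\|_\infty\le\sigma_{\max}\|\boldsymbol{e}_k\|_\infty+\|\boldsymbol{\tilde e}_k\|_\infty$.

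The one substantive divergence is precisely the point you flag as the ``main obstacle,'' and the paper sidesteps it rather than solving it. It does \emph{not} look for a forward--invariant region on which $\pi_{U_{trac}}$ is inactive. Instead it writes $\pi_{U_{trac}}(\boldsymbol{u}^{trac}_{i,k})=s_{i,k}\,\boldsymbol{u}^{trac}_{i,k}$ with the scalar $s_{i,k}=\min\{U_{trac},\|\boldsymbol{u}^{trac}_{i,k}\|\}/\|\boldsymbol{u}^{trac}_{i,k}\|$, leaves this factor inside the closed--loop matrix
\[
E=\begin{bmatrix} I & TI\\ -k_pTs_{i,k}\mathcal{L}_f & I-k_vTs_{i,k}\mathcal{L}_f\end{bmatrix},
\]
and then asks the Routh inequalities (after the bilinear map) to hold \emph{uniformly} in $s_{i,k}$ and in the eigenvalues $\lambda_i$ of $\mathcal{L}_f$. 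The presence of $U_{trac}$ in the upper bound of \eqref{eq:kpkv} is exactly the paper's way of covering the worst value of $s_{i,k}$ in the inequality $k_pT^{2}s_{i,k}\lambda_i-2k_vTs_{i,k}\lambda_i+4>0$; no invariant--set construction is attempted, and the conclusion is global in the initial error. Your route, by contrast, would be local unless you separately controlled the estimator transients to keep $\|\boldsymbol{u}^{trac}_{i,k}\|\le U_{trac}$ from the start. On the other hand, your version is more explicit about two points the paper glosses over: (i) the passage from $\boldsymbol{e}_k\to 0$ to $\boldsymbol{e}_{t,k}\to 0$ genuinely uses $\tfrac{1}{n}\sum_i\boldsymbol{p}^{*}_{i,k}\to 0$ from the balanced oscillator limit, which you state and the paper does not; and (ii) the one--step delay and the time--varying $s_{i,k}$ inside $E$ are handled by the paper only at the level of the norm bound, not by a fixed diagonalization.
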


      \begin{proof}
          Without considering noise, it can be seen that $ \lim_{k\rightarrow\infty} \| \boldsymbol{\tilde{p}}_{ij,k} \| \triangleq \lim_{k\rightarrow\infty} \| \boldsymbol{\hat{p}}_{ij,k} - \boldsymbol{p}_{ij,k} \| =0 $ and $ \lim_{k\rightarrow\infty} \| \boldsymbol{\tilde{v}}_{ij,k} \| \triangleq \lim_{k\rightarrow\infty} \| \boldsymbol{\hat{v}}_{ij,k} - \boldsymbol{v}_{ij,k} \| =0 $. Let $ \boldsymbol{e}_{pi,k} = \boldsymbol{p}_{i0,k} - \boldsymbol{p}_{i0,k}^* $, $ \boldsymbol{e}_{vi,k} = \boldsymbol{v}_{i0,k} - \boldsymbol{v}_{i0,k}^*$, and denote 
          $$ \boldsymbol{e}_{k} \triangleq \left[ \begin{matrix} \boldsymbol{e}_{p1,k}^T & \dots & \boldsymbol{e}_{pn,k}^T & \boldsymbol{e}_{v1,k}^T & \dots & \boldsymbol{e}_{vn,k}^T \end{matrix} \right]^T $$ Thus, to show the main goal  $ \lim_{k\rightarrow\infty}  \| \boldsymbol{e}_{t,k} \|_{\infty} = 0$, it is equal to prove $ \lim_{k\rightarrow\infty}  \| \boldsymbol{e}_{k} \|_{\infty} = 0$.
          
          
          Rewritten $\boldsymbol{u}_{i,k+1}^{trac}$ in \eqref{eq:formationtrackingcontrol}  as
              $$ \begin{aligned}
                &\boldsymbol{u}_{i,k+1}^{trac} \triangleq -k_p \sum_{j\in\mathcal{N}_i} a_{ij}\left( {\boldsymbol{e}_{pi,k}}-\boldsymbol{e}_{pj,k} + \boldsymbol{\tilde{p}}_{ij,k} \right) \\
                & \qquad \qquad -k_v \sum_{j\in\mathcal{N}_i} a_{ij}\left( \boldsymbol{e}_{vi,k} - \boldsymbol{e}_{vj,k} + \boldsymbol{\tilde{v}}_{ij,k} \right)  \\
                \end{aligned}
              $$ 
          Combing the dynamic model \eqref{eq:dynamic}, desired formation pattern \eqref{eq:desired_pos}, \eqref{eq:desired_vel}, \eqref{eq:desired_acc} and applying (\ref{eq:formationtrackingcontrol}), there is
              \begin{equation*}
                  \begin{aligned}
                      & \begin{bmatrix} \boldsymbol{e}_{pi,k+1} \\ \boldsymbol{e}_{vi,k+1} \end{bmatrix} = \left( \begin{bmatrix} I & TI \\ \boldsymbol{0} & I \end{bmatrix} - s_{i,k} T l_{ii} \begin{bmatrix} \boldsymbol{0} & \boldsymbol{0} \\ k_p I & k_v I \end{bmatrix} \right) \begin{bmatrix}  \boldsymbol{e}_{pi,k} \\ \boldsymbol{e}_{vi,k} \end{bmatrix} \\ 
                      &\qquad + s_{i,k} T \sum_{j\in \mathcal{V}_f}  a_{ij} \begin{bmatrix} \boldsymbol{0} & \boldsymbol{0} \\ k_p I & k_v  I \end{bmatrix} \left( \begin{bmatrix}  \boldsymbol{e}_{pj,k} \\ \boldsymbol{e}_{vj,k} \end{bmatrix}-\begin{bmatrix}  \boldsymbol{\tilde{p}}_{ij,k} \\ \boldsymbol{\tilde{v}}_{ij,k} \end{bmatrix} \right) \\
                  \end{aligned}
              \end{equation*}
            where $l_{ii}=\sum_{j=0}^{n}a_{ij}$ in the graph $ \mathcal{G} $, and $$s_{i,k}= \frac{\min \{U_{trac},\| \boldsymbol{u}_{i,k+1}^{trac} \|\}}{\| \boldsymbol{u}_{i,k+1}^{trac} \|}  < U_{trac}$$

          For all UAV $ i\in \mathcal{V}_f $, the error dynamics is further rewritten as
              \begin{equation}\label{eq:error_dynamic}
                  \begin{aligned}
                      \boldsymbol{e}_{k+1} = E \otimes I \boldsymbol{e}_{k} - \boldsymbol{\tilde{e}}_{k}
                  \end{aligned}
              \end{equation}  
          where
              $$ E = \begin{bmatrix} I & TI \\ -k_p T s_{i,k} \mathcal{L}_f & I - k_v T s_{i,k} \mathcal{L}_f \end{bmatrix} $$
          and $ \boldsymbol{\tilde{e}}_{k} \triangleq \left[ \begin{matrix} \boldsymbol{0}_n^T & \boldsymbol{\tilde{e}}_{1,k}^T & \dots & \boldsymbol{\tilde{e}}_{n,k}^T \end{matrix} \right]^T$ with 
              $$ \boldsymbol{\tilde{e}}_{i,k} = s_{i,k} T \sum_{j\in \mathcal{V}_f} a_{ij} \begin{bmatrix} k_p I & k_v I \end{bmatrix} \begin{bmatrix}  \boldsymbol{\tilde{p}}_{ij,k} \\ \boldsymbol{\tilde{v}}_{ij,k} \end{bmatrix} $$
          Thus, $ \lim_{k\rightarrow\infty} \| \boldsymbol{\tilde{e}}_{k} \| =0 $.
          Here, the dimension of matrix $I$ is omitted which can be easily inferred.
          
          The eigenvalues of $E \otimes I$ mainly depend on the eigenvalues of $E$ (Lemma \ref{lemma:Kronecker}). Thus, the convergence analysis of the error system (\ref{eq:error_dynamic}) is mainly relative to the eigenvalues of $E$. By Lemma \ref{lemma:eigenvalue_L} and Lemma \ref{lemma:det}, the characteristic polynomial of $E$ is written as
              \begin{equation*}
                  \begin{aligned}
                      \det(s I_{2n} - E)
                      &= \prod_{i=1}^{n} \bigg( s^2 + (k_v T s_{i,K} \lambda_i - 2) s \\
                      &\qquad + k_p T^2 s_{i,K} \lambda_i - k_v T s_{i,K} \lambda_i + 1 \bigg)\\
                  \end{aligned}
              \end{equation*}        
          Thus, it is obvious that the eigenvalues of $E$ satisfy
              \begin{equation*}
                  \begin{aligned}
                      p(s,\lambda_i) &\triangleq s^2 + (k_v T s_{i,k} \lambda_i - 2) s \\
                      &\qquad \quad + k_p T^2 s_{i,k} \lambda_i - k_v T s_{i,k} \lambda_i + 1 = 0
                  \end{aligned}
              \end{equation*}
          After that, applying the bilinear transformation, which is defined by $ s = \frac{\tau+1}{\tau-1}$, and it maps the roots in the unit circle into the left complex plane \cite{eichlerClosedformSolutionOptimal2014}. Then, it can be obtained that
              \begin{equation*}
                  \begin{aligned}
                      p(\tau,\lambda_i) &= (k_p T^2 s_{i,k} \lambda_i)\tau^2  + 2(k_v T s_{i,k} \lambda_i-k_p T^2 s_{i,k} \lambda_i)\tau \\ & \qquad+ k_p T^2 s_{i,k} \lambda_i - 2 k_v T s_{i,k} \lambda_i + 4
                  \end{aligned}
              \end{equation*} 
          Based on Lemma \ref{lemma:routh}, if all the coefficients of $p(\tau,\lambda_i)$ are positive, then the error dynamics $ \boldsymbol{e}_{k+1} = E \otimes I \boldsymbol{e}_{k} $ will be stable. By chosen $ k_p>0 $, the first coefficient $ k_p T^2 s_{i,k} \lambda_i $ is always positive for all $i$ and $k$. Thus, by solving
              \begin{equation*}
                  \left\{
                       \begin{aligned}
                        &-k_p T^2 s_{i,k} \lambda_i + k_v T s_{i,k} \lambda_i > 0 \\
                        & k_p T^2 s_{i,k} \lambda_i - 2 k_v T s_{i,k} \lambda_i + 4 > 0 \\
                        \end{aligned}
                  \right.
              \end{equation*}        
         for $ \forall i \in  \mathcal{V}_f$, and $ \forall k \in  \mathbb{N}$, the condition \eqref{eq:kpkv} is satisfied.
          
          Based on the above analysis, there exists an orthogonal matrix $ \Omega $ that satisfies $E=\Omega \Sigma \Omega^{-1} $, where $ \Sigma = \text{diag}\left( \sigma_1, \sigma_2, \dots, \sigma_{2n}  \right) $ with $ \sigma_{i}, i=1,2,\dots,2n$ are the eigenvalues of $E$. In addition, with \eqref{eq:kpkv} hold, all the eigenvalues of $E$ are within the unit circle, \emph{i.e.}, $ |\sigma_i| < 1$, for all $ i=1,2,\dots,2n$. By applying the triangle inequality
              \begin{equation*}
                  \begin{aligned}
                      \| \boldsymbol{e}_{k+1} \|_{\infty} &\leq \|E\|_{\infty} \|\boldsymbol{e}_{k}\|_{\infty} + \|\boldsymbol{\tilde{e}}_{k}\|_{\infty}\\
                      & \leq \|\Omega\|_{\infty} \|\Omega^{-1}\|_{\infty}  \|\Sigma\|_{\infty}  \|\boldsymbol{e}_{k}\|_{\infty} + \|\boldsymbol{\tilde{e}}_{k}\|_{\infty}\\ 
                      & = \sigma_{max} \|\boldsymbol{e}_{k}\|_{\infty} + \|\boldsymbol{\tilde{e}}_{k}\|_{\infty}\\
                      & \leq \sigma_{max}^{k+1} \|\boldsymbol{e}_{0}\|_{\infty} + \sum_{i=0}^{k} \sigma_{max}^{k-i} \|\boldsymbol{\tilde{e}}_{i}\|_{\infty}
                  \end{aligned}
              \end{equation*}          
          where $ \sigma_{max} = \max_{i=1,2,\dots,2n}\{ |\sigma_i| \} \in (0,1) $. 
          Since $ \|\boldsymbol{e}_{0}\|_{\infty} $ is bounded for any initial state, there is
              $$ \lim_{k\rightarrow\infty} \sigma_{max}^{k+1} \|\boldsymbol{e}_{0}\|_{\infty} =0 $$
          Furthermore, $ \boldsymbol{\tilde{e}}_{k} $ is bounded and converge to $0$ as $k\rightarrow\infty$. In other words, for every $ \varepsilon > 0$, there exists a positive integer $ N $ such that $ \|\boldsymbol{\tilde{e}}_{k}\| < \varepsilon $ for all $ k \ge N$. Thus,
              \begin{equation*}
                  \begin{aligned}
                      \sum_{i=0}^{k} \sigma_{max}^{k-i} \|\boldsymbol{\tilde{e}}_{i}\|_{\infty} & = \sum_{i=0}^{N} \sigma_{max}^{k-i} \|\boldsymbol{\tilde{e}}_{i}\|_{\infty} + \sum_{i=N+1}^{k} \sigma_{max}^{k-i} \|\boldsymbol{\tilde{e}}_{i}\|_{\infty} \\
                      &< \sum_{i=0}^{N} \sigma_{max}^{k-i} \|\boldsymbol{\tilde{e}}_{i}\|_{\infty} + \frac{1-\sigma_{max}^{k-N-1}}{1-\sigma_{max}} \varepsilon
                  \end{aligned}
              \end{equation*}
          For the determinate $N$, as $ k\rightarrow\infty $, it can be obtained that
              \begin{equation*}
                  \begin{aligned}
                      \lim_{k\rightarrow\infty} \sum_{i=0}^{k} \sigma_{max}^{k-i} \|\boldsymbol{\tilde{e}}_{i}\|_{\infty} = 0
                  \end{aligned}
              \end{equation*}        
          To sum up, $ \lim_{k\rightarrow\infty}  \| \boldsymbol{e}_{k} \|_{\infty} = 0$ , \emph{i.e.}, the system (\ref{eq:error_dynamic}) is convergent. Consequently, the Theorem \ref{them:constant_target} is proved.      
      \end{proof}
  
  \subsection{Target Tracking Analysis at a Varying Velocity}
  In practice, measurement noise and disturbances are unavoidable. Therefore, the estimation error is expected to converge to $ \eta > 0$, where $ \eta $ is the upper bound of the error. Formally, there is $\lim_{k\rightarrow\infty} \| \boldsymbol{\tilde{e}}_{k} \| = \eta $. In addition, the state of the target motion may not be at a constant velocity whether due to process noise $\boldsymbol{\omega}_{0,k}$ or due to the changing input $ \boldsymbol{u}_{0,k} $. To this end, the condition of considering noise and bounded input for the target is invested in this paper, which is given below.
      \begin{theorem}\label{them:nonconstant_target}
          Suppose Assumption \ref{assump:graph} holds, and select $k_p,k_v$ by the condition (\ref{eq:kpkv}), if the target driven by an unknown bounded input $ \| \boldsymbol{u}_{0,k} \| < U_0 $, the enclosing tracking problem can be solved with the ultimate bound $ (\eta + T U_0)/(1-\sigma_{max}) $ under the tracking control law (\ref{eq:formationtrackingcontrol}).
      \end{theorem}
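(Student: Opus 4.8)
The plan is to reuse the error dynamics (\ref{eq:error_dynamic}) from the proof of Theorem~\ref{them:constant_target}, but now track the two extra perturbation sources: the nonzero target input $\boldsymbol{u}_{0,k}$ and the non-vanishing estimation error residual $\boldsymbol{\tilde{e}}_k$, whose limiting magnitude is $\eta$. First I would redo the derivation of the error recursion with $\boldsymbol{u}_{0,k} \not\equiv 0$. Making the difference between the UAV dynamics (\ref{eq:dynamic}) and the target dynamics now leaves an extra additive term of the form $[\,\boldsymbol{0}^T\ (T\boldsymbol{u}_{0,k})^T\,]^T$ (stacked over all $i$), because the desired acceleration $\boldsymbol{u}_{i,k}^*$ in (\ref{eq:desired_acc}) is referenced to the target frame and the feedforward cancels only the circular-motion acceleration, not the target's own maneuver. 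So the recursion becomes
\begin{equation*}
  \boldsymbol{e}_{k+1} = (E\otimes I)\,\boldsymbol{e}_k - \boldsymbol{\tilde{e}}_k - \boldsymbol{d}_k,
\end{equation*}
where $\boldsymbol{d}_k$ collects the target-input perturbation and satisfies $\|\boldsymbol{d}_k\|_\infty \le T U_0$.

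Next I would invoke exactly the spectral bound already established under condition (\ref{eq:kpkv}): there is an orthogonal $\Omega$ with $E=\Omega\Sigma\Omega^{-1}$ and $\sigma_{max}=\max_i|\sigma_i|\in(0,1)$, so $\|(E\otimes I)\boldsymbol{e}_k\|_\infty \le \sigma_{max}\|\boldsymbol{e}_k\|_\infty$. Unrolling the recursion and applying the triangle inequality gives
\begin{equation*}
  \|\boldsymbol{e}_{k+1}\|_\infty \le \sigma_{max}^{k+1}\|\boldsymbol{e}_0\|_\infty + \sum_{i=0}^{k}\sigma_{max}^{k-i}\big(\|\boldsymbol{\tilde{e}}_i\|_\infty + \|\boldsymbol{d}_i\|_\infty\big).
\end{equation*}
Then I would split the sum as in the previous proof: for any $\varepsilon>0$ pick $N$ with $\|\boldsymbol{\tilde{e}}_i\|_\infty < \eta+\varepsilon$ for $i\ge N$, bound $\|\boldsymbol{d}_i\|_\infty \le T U_0$ uniformly, and use $\sum_{i=0}^{k}\sigma_{max}^{k-i} \le \frac{1}{1-\sigma_{max}}$. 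The finite head $\sum_{i=0}^{N}\sigma_{max}^{k-i}(\cdots)$ vanishes as $k\to\infty$ because each term carries a factor $\sigma_{max}^{k-i}\to 0$, and the tail is bounded by $\frac{1}{1-\sigma_{max}}(\eta+\varepsilon+TU_0)$. Letting $k\to\infty$ and then $\varepsilon\to 0$ yields $\limsup_{k\to\infty}\|\boldsymbol{e}_k\|_\infty \le \frac{\eta+TU_0}{1-\sigma_{max}}$, which is the claimed ultimate bound; the passage from $\boldsymbol{e}_k$ to $\boldsymbol{e}_{t,k}$ is the same averaging argument noted in the constant-velocity proof ($\|\boldsymbol{e}_{t,k}\|_\infty$ is controlled by $\|\boldsymbol{e}_k\|_\infty$).

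The main obstacle I anticipate is pinning down the perturbation term $\boldsymbol{d}_k$ cleanly. One must check that the saturation map $\pi_{U_{trac}}$ and the feedforward $\boldsymbol{u}_{i,k}^*$ interact so that the \emph{only} uncanceled exogenous term is the target acceleration (and the already-accounted estimation residual), i.e.\ that no additional cross terms of order $U_0$ leak in through $s_{i,k}$ or through the desired-state references, and that $\|\boldsymbol{u}_{0,k}\|\le U_0$ indeed translates into $\|\boldsymbol{d}_k\|_\infty \le T U_0$ after the Kronecker/stacking bookkeeping. A secondary subtlety is that $s_{i,k}$ is time-varying, so $E$ is strictly speaking $E_k$; the resolution is the same as in Theorem~\ref{them:constant_target}, namely that condition (\ref{eq:kpkv}) forces $|\sigma_i(E_k)|<1$ uniformly in $k$ because $s_{i,k}\in(0,U_{trac}]$, so a single $\sigma_{max}<1$ works along the whole trajectory. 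Once those points are settled, the rest is the geometric-series estimate already carried out above.
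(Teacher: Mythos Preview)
Your proposal is correct and follows essentially the same route as the paper: write the perturbed error recursion $\boldsymbol{e}_{k+1}=(E\otimes I)\boldsymbol{e}_k-\boldsymbol{\tilde{e}}_k-T\boldsymbol{u}_{n0,k+1}$ with $\|\boldsymbol{u}_{n0,k}\|_\infty\le U_0$, invoke the spectral bound $\sigma_{max}<1$ from Theorem~\ref{them:constant_target}, unroll via the triangle inequality, and sum the geometric series to get the ultimate bound $(\eta+TU_0)/(1-\sigma_{max})$. Your $\varepsilon$-splitting of the $\boldsymbol{\tilde{e}}_k$ sum is in fact slightly more careful than the paper's version (which tacitly treats $\eta$ as a uniform bound rather than a limit), and your remarks about the time-varying $E_k$ and the bookkeeping for $\boldsymbol{d}_k$ are exactly the points the paper leaves implicit.
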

      
      \begin{proof}
          The error dynamics is given as
              \begin{equation}\label{eq:error_dynamic_2}
                  \begin{aligned}
                      \boldsymbol{e}_{k+1} = E \otimes I \boldsymbol{e}_{k} - \boldsymbol{\tilde{e}}_{k} - T \boldsymbol{u}_{n0,k+1}
                  \end{aligned}
              \end{equation}
          where $ \boldsymbol{u}_{n0,k} \triangleq \left[ \begin{matrix} \boldsymbol{0}_n^T & \boldsymbol{1}_n^T \otimes \boldsymbol{u}_{0,k}^T \end{matrix} \right]^T$ with $ \| \boldsymbol{u}_{n0,k} \|_{\infty} < U_0 $. 
           By applying the triangle inequality to (\ref{eq:error_dynamic_2})
              \begin{equation*}
                  \begin{aligned}
                      \| \boldsymbol{e}_{k+1} \|_{\infty} &\leq \sigma_{max} \|\boldsymbol{e}_{k}\|_{\infty} + \|\boldsymbol{\tilde{e}}_{k}\|_{\infty} + T \| \boldsymbol{u}_{n0,k} \|_{\infty}  \\
                      & \leq \sigma_{max}^{k+1} \|\boldsymbol{e}_{0}\|_{\infty} + \frac{1-\sigma_{max}^{k}}{1-\sigma_{max}} ( \eta + T U_0) \\
                  \end{aligned}
              \end{equation*}       
          Hence
          $$ \lim_{k\rightarrow\infty} \| \boldsymbol{e}_{k+1} \|_{\infty} \leq \frac{\eta + T U_0}{1-\sigma_{max}} $$
          Consequently, the Theorem \ref{them:nonconstant_target} is proved.
      \end{proof}
  
  \section{SIMULATIONS AND EXPERIMENTS}\label{sim&exp}
  Simulation and experiment results are given in this section. Three different scenarios of four UAVs enclosing and tracking a ground target are considered, \emph{e.g.} a constant velocity target, a varying velocity target, and a constant velocity target in an environment with obstacles. 
  
  The communication topology among the UAVs is  $$\mathcal{A}_f=a_{ij}\begin{bmatrix} 0 & 1 & 0 & 1 \\ 1 & 0 & 1 & 0 \\ 0 & 1 & 0 & 1 \\ 1 & 0 & 1 & 0 \end{bmatrix}$$ with $ a_{ij}=1/3$. The covariance of process noise of the target and UAVs are $ Q_0 = 0.04 I $ and $ Q_i = 0.001 I$ respectively. The covariance of the measurement noise is $ R_i = 10^{-4}I$. And set the sample time $T=0.1s$. For the relative localization algorithm (\ref{eq:estimator}), let $ \varGamma_{ij,0} = I $ and $\beta = 0.9$. In the DKF algorithm, let $ \epsilon = 0.1 $, $P_{i,0}^{+}=I$ and the initial state is given as $\boldsymbol{\bar{x}}_{i0,0}=\boldsymbol{0} $. The $ \theta_{i,0} $ for all UAVs is given randomly. Let $ \Delta \theta = 0.05 rad $, $K_l = 1, l=1,...,n-1$, and $K_n = -1$. For the controller (\ref{eq:formationtrackingcontrol}), let $k_p=0.9$, $k_v=0.5$ and $U_{trac}=0.7$.
    
  The trajectory and error analysis are given in each experiment, in which $\| \boldsymbol{e}_{t,k} \|$ is the formation tracking error and ${\max_{(i,j)\in\mathcal{E}_f}\| \boldsymbol{\tilde{p}}}_{ij,k}\|$ denotes the relative position estimation error. The DKF estimation errors are evaluated by ${\max_{i\in\mathcal{V}_f}\|\boldsymbol{\tilde{p}}}_{i0,k}\|$ and ${\| \boldsymbol{\tilde{v}}}_{0,k}\|$. $\max_{i\in\mathcal{V}_f}\|\boldsymbol{e}_{pi,k}\|$ denotes the formation error, that is, the ``distance'' to the desired uniform distribution on a circle.
  
  \subsection{Numerical Simulations}
   The simulation scenarios are constructed in AirSim\footnote{\url{https://microsoft.github.io/AirSim/}}, which is a high-fidelity and physical simulation platform for autonomous vehicles.  The initial positions of the UAV projection to the $x,y$ plane are $[1,-3]^T,[-3,1]^T,[-1,2]^T$ and $[-2,-1]^Tm$. All initial velocities of UAVs are $[0,0]^T m/s$. The initial position of the vehicle is $[0,0]^Tm$. In AirSim, a facing down camera with $90^\circ$ FoV is installed on each UAV. The UAVs get measurements relative to the vehicle by using the \emph{Object Detection} API provided by AirSim. The UAVs are expected to be evenly distributed in a circle with a radius of $\rho=4m$ centered on the location of the vehicle. At the start of the simulations, the UAVs hover at an altitude of $15m$ above the ground.
  
  1) \emph{Target at a Constant Velocity:} In the first simulation, the UAVs are expected to track the vehicle moving in constant velocity $ \boldsymbol{v}_{0} = [1,0]^T m/s $. The trajectories of the vehicle and UAVs are shown in Fig.~\ref{fig:constant_velocity_traj}. The formation center path gradually converges to the target, and the UAVs are evenly distributed around the desired circumference. The tracking and estimation errors are given in Fig.~\ref{fig:constant_velocity_error}. Due to sensor noise, errors do not converge to zero ideally but instead converge to a small value. 
    \begin{figure}[tbp]
      \centering
      \includegraphics[width=0.9\linewidth]{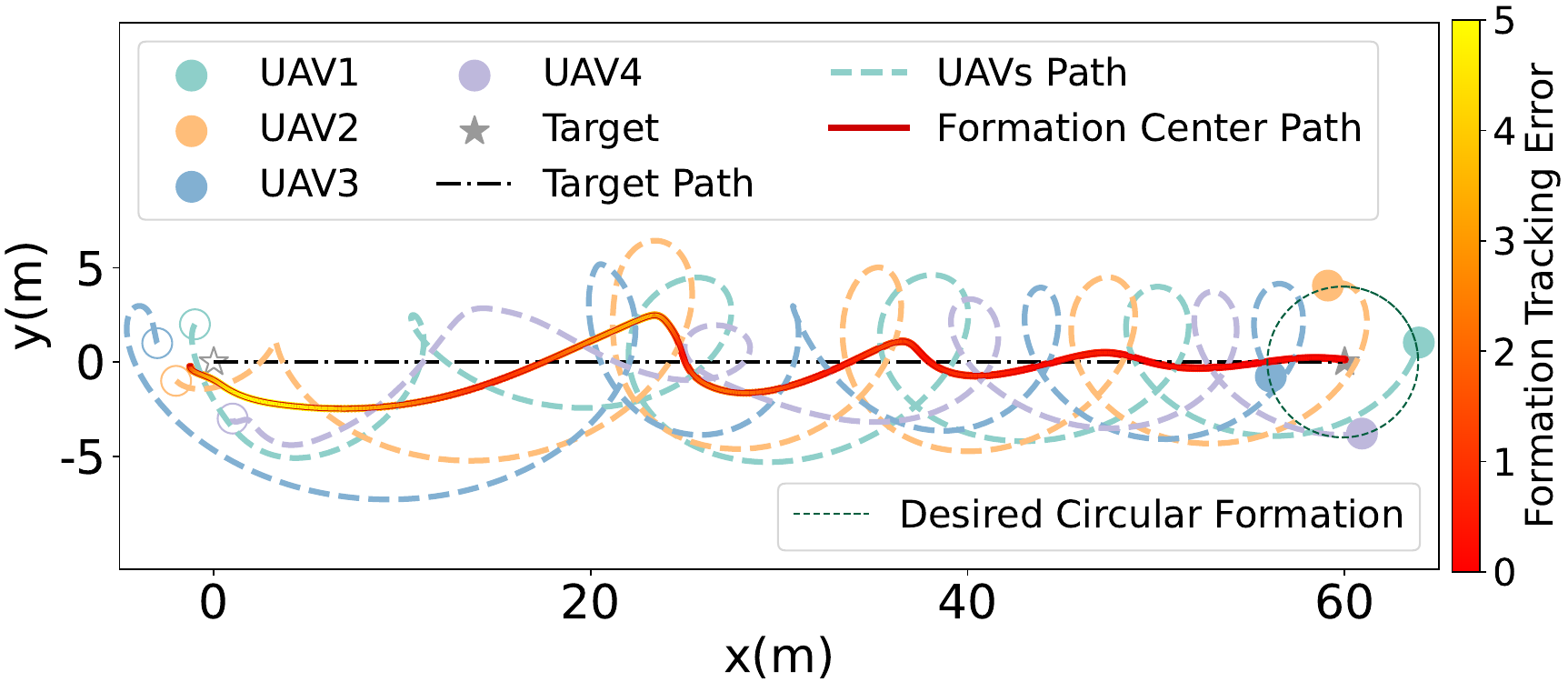}
      \caption{ The paths of four UAVs enclosing and tracking a target with a constant velocity in simulation. }
      \label{fig:constant_velocity_traj}
    \end{figure}
   
     \begin{figure}[tbp]
      \centering
      \includegraphics[width=\linewidth]{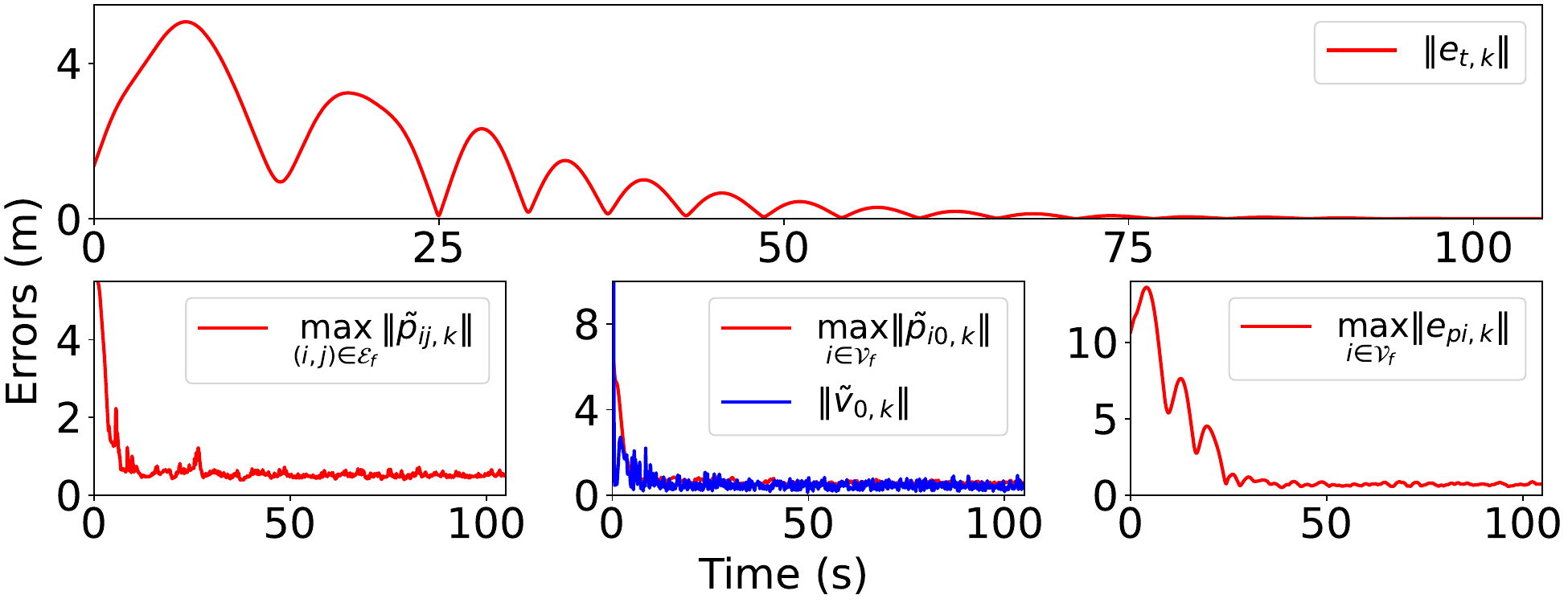}
      \caption{ Error analysis for the simulation of target tracking at a constant velocity.   }
      \label{fig:constant_velocity_error}
    \end{figure}

  2) \emph{Target at a Varying Velocity:} As mentioned in Theorem \ref{them:nonconstant_target}, the formation tracking algorithm proposed in the paper could also handle tracking a moving target with varying velocity. In this simulation, the vehicle is moving with a changing input $$  \boldsymbol{u}_{0,k} = \begin{bmatrix} \cos{\frac{2\pi k}{450}} & -\sin{\frac{2\pi k}{450}}\\ \sin{\frac{2\pi k}{450}} & \cos{\frac{2\pi k}{450}} \end{bmatrix} \begin{bmatrix}  0 \\ 0.1 \end{bmatrix} m/s^2 $$
  The paths of the target and UAVs are shown in Fig.~\ref{fig:acc_traj}. The formation center path gradually maintains a fixed error distance with the target path. The estimation and tracking errors are depicted in Fig.~\ref{fig:acc_error}.  
  
    \begin{figure}[tbp]
      \centering
      \includegraphics[width=0.9\linewidth]{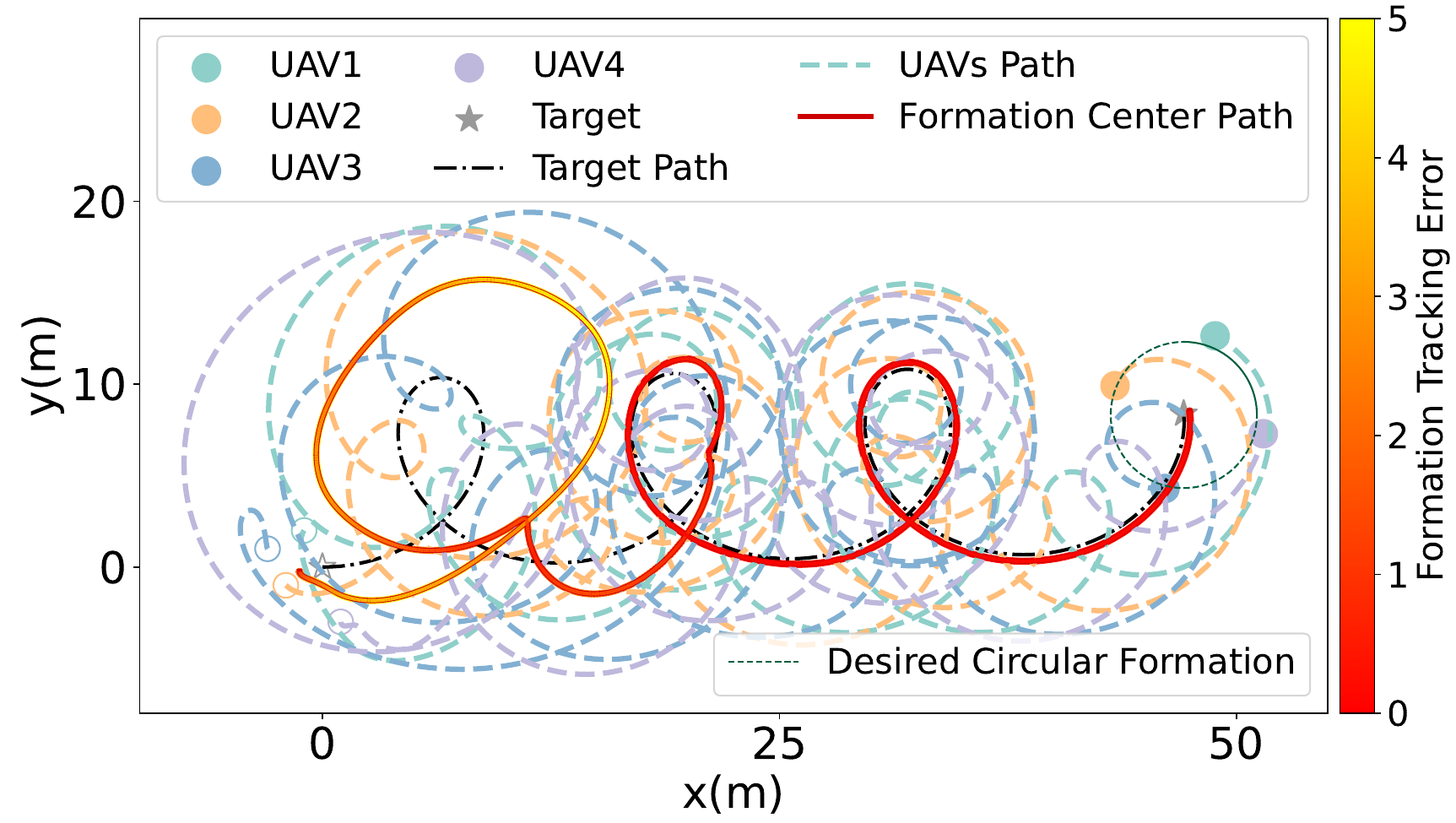}
      \caption{The paths of four UAVs enclosing and tracking a target at a varying velocity in simulation. }
      \label{fig:acc_traj}
    \end{figure}

     \begin{figure}[tbp]
      \centering
      \includegraphics[width=\linewidth]{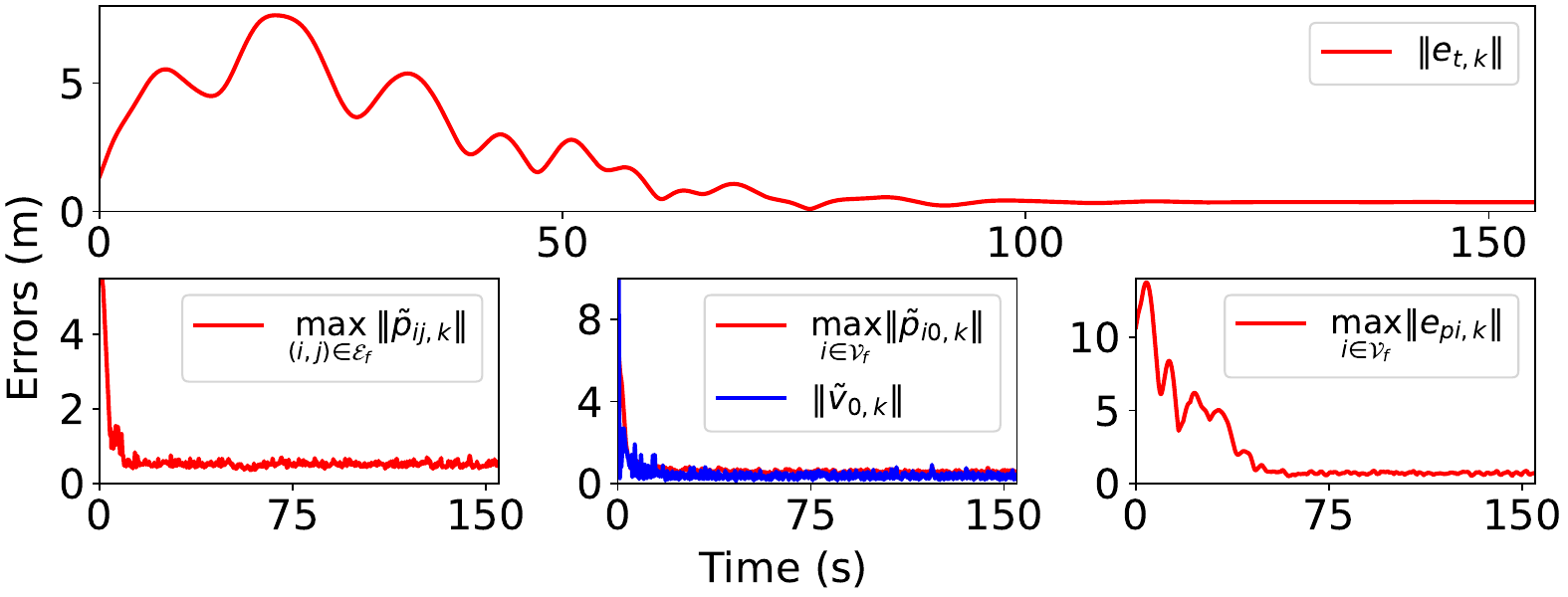}
      \caption{Error analysis for the simulation of target tracking at a varying velocity. }
      \label{fig:acc_error}
    \end{figure}

       \begin{figure}
      \centering	\includegraphics[width=0.85\linewidth]{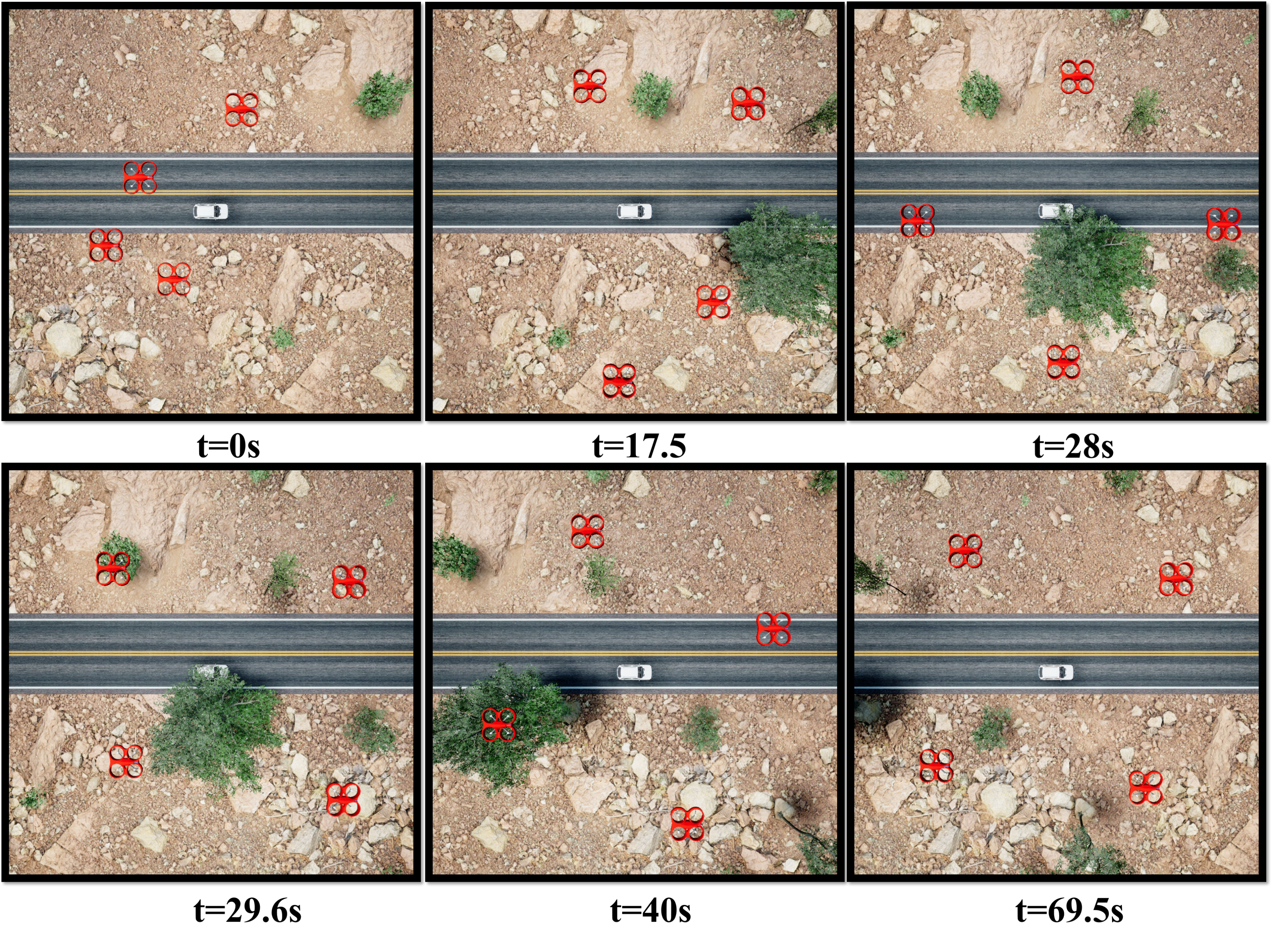}
      \caption{ Formation tracking process with vision occlusion in AirSim scenario. Trees were placed near the vehicle's path, thus obstructing the view from the UAVs. }
      \label{fig:time_airsim}
    \end{figure}
    
  3) \emph{Vision Occlusion:} In this simulation, a scenario with vision occlusion is constructed. Trees were placed near the path of the vehicle in AirSim, as shown in Fig.~\ref{fig:time_airsim}. The velocity of the vehicle is set to $ \boldsymbol{v}_{0} = [1,0]^T m/s $. During the formation tracking process, certain UAVs lost the sensing of the target from $27.3s$ to $34.2s$, due to the occlusion of trees. But other UAVs can still get measurements of the vehicle from other locations, as shown in Fig.~\ref{fig:tracking}. The state estimation of the target can be maintained continuously through information exchange and relative position measurement. The paths of the target and UAVs are shown in Fig.~\ref{fig:traj_block}, and areas with occlusions are filled in green color. The estimation and tracking errors are shown in Fig.~\ref{fig:errors_blocks}. Due to occlusion, the state estimation error in the DKF algorithm will be increased temporarily. This is mainly caused by applying the indirect measurement $ \boldsymbol{\hat{p}}_{ij} +  \boldsymbol{\hat{q}}_{j0}$ as input of DKF. However, thanks to the upper bound of controller \eqref{eq:formationtrackingcontrol}, the increase in estimation error does not greatly affect the tracking process. As the target reappears, the estimation error converges. 
  
  

     \begin{figure}
              \centering
      \includegraphics[width=\linewidth]{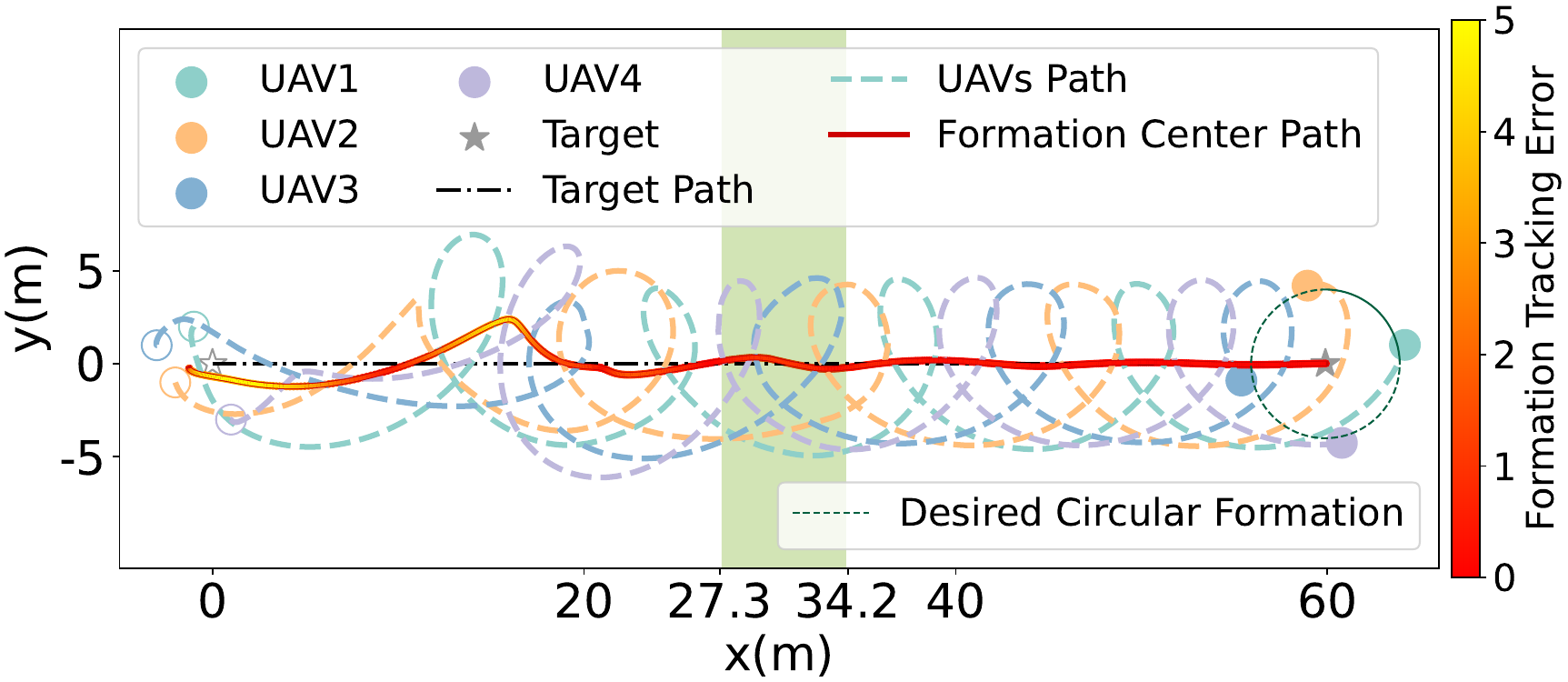}
      \caption{The paths of four UAVs enclosing and tracking a target at a constant velocity in a simulation scenario with vision occlusion. Areas with occlusions are filled in green color. }
      \label{fig:traj_block}
    \end{figure}
     \begin{figure}
              \centering
      \includegraphics[width=\linewidth]{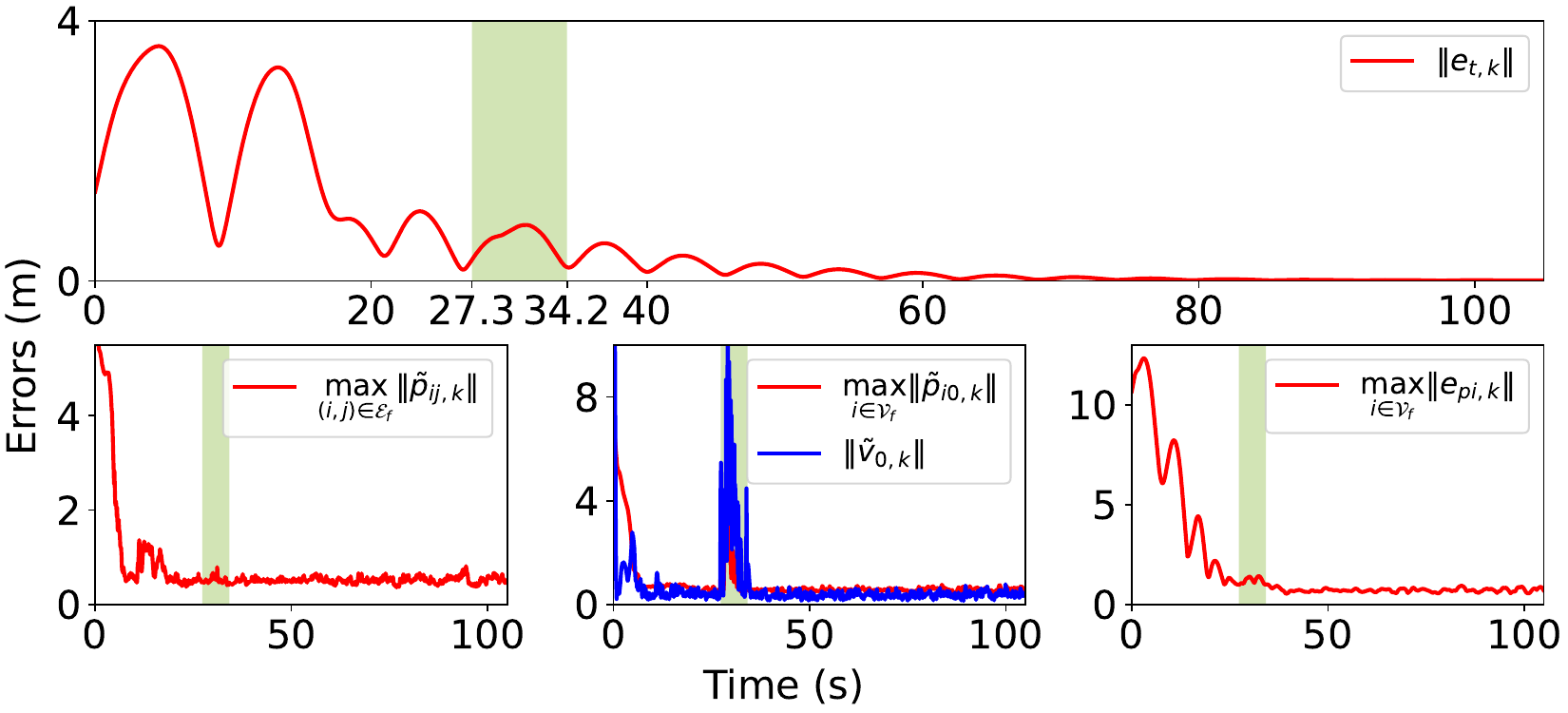}
      \caption{Error analysis of target tracking at a constant velocity in a scenario with vision occlusion. The green-filled area indicates the period when there is occlusion. }
      \label{fig:errors_blocks}
    \end{figure}
    
       \begin{figure}
              \centering		\includegraphics[width=\linewidth]{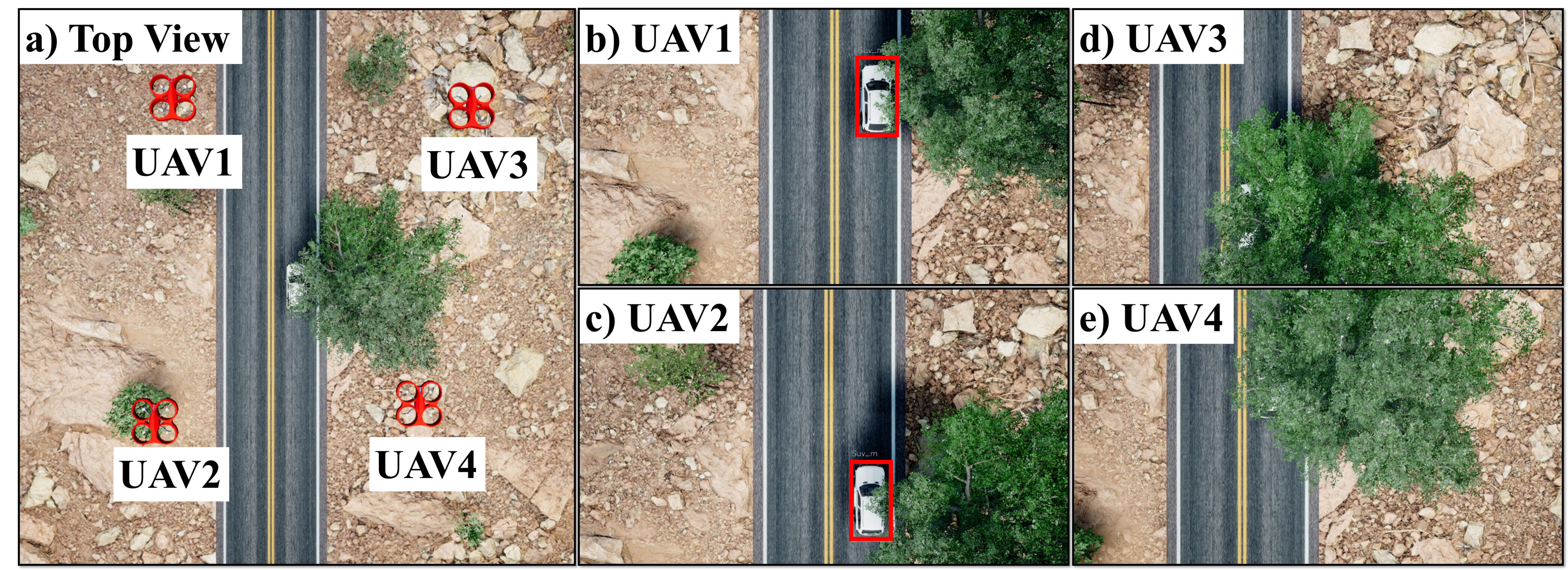}
      \caption{ At time 29.6s, only UAV 1 and 2 can see the vehicle, while UAV 3 and 4 are obstructed by trees and cannot view the target. a) The top view of the simulation. b), c), d), and e) show images captured by UAVs, respectively.}
      \label{fig:tracking}
    \end{figure}
  
  \subsection{Experimental Results}
  A vector field-based collision-free control term is applied to ensure safe flight during experiments. It is assumed that the UAVs have the omnidirectional range perception ability. Thus, UAVs can get a series of discrete points $  \boldsymbol{r}_{im,k}, m={1,...,M} \in \mathbb{N}$ around it with the distance $ \| \boldsymbol{r}_{im} \| \leq d_{safe} $, where $ \boldsymbol{r}_{im,k} $ is the vector points to the point $m$ in the local coordinate frame of UAVs. Thus, the collision-free control term $ \boldsymbol{u}_{i,k}^{safe} $ is given as 
        \begin{equation*}\label{eq:collision-free}
            \boldsymbol{u}_{i,k}^{safe} = \sum_{m=1}^M k_1 \exp \left(-\frac{\left\lVert \boldsymbol{r}_{im,k} \right\rVert^2 }{k_2}\right) \frac{\boldsymbol{r}_{im,k}}{\left\lVert \boldsymbol{r}_{im,k} \right\rVert}
      \end{equation*}
where $k_1,k_2\in \mathbb{R}$ are positive scalar parameter related to the safe distance $d_{safe}$.
 Hence, in the experiments, 
         \begin{equation*}\label{eq:controller+collision-free}
            \boldsymbol{u}_{i,k} = \pi_{U_{trac}}\left( \boldsymbol{u}_{i,k}^{trac} \right) + \pi_{U_{safe}}\left( \boldsymbol{u}_{i,k}^{safe} \right) + \boldsymbol{u}_{i,k}^*
      \end{equation*}
where $ U_{safe} >0$ is the upper bound for the collision-free term. 

     \begin{figure}
              \centering
      \includegraphics[width=0.9\linewidth]{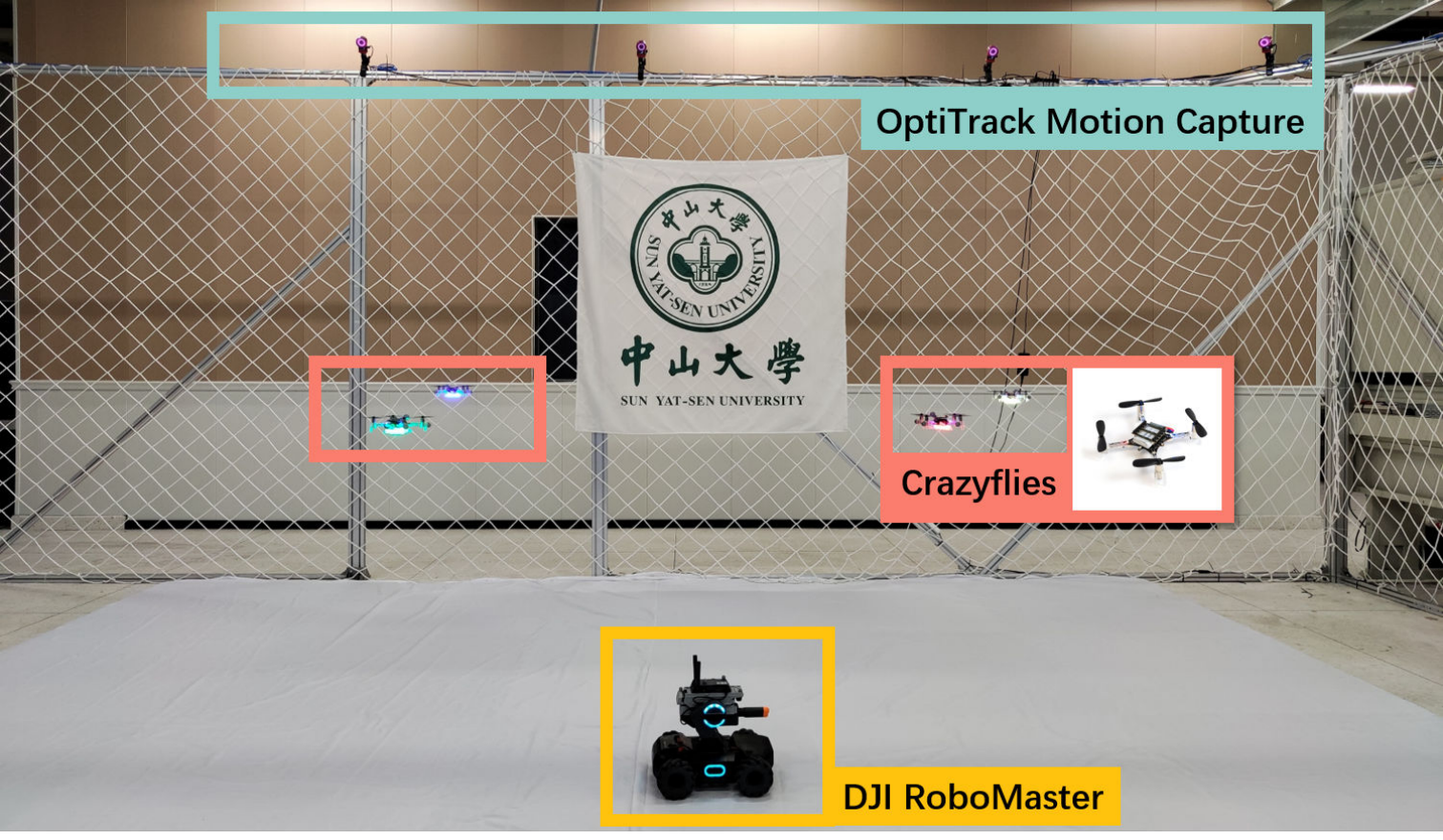}
      \caption{The diagram of the indoor experimental environment.}
      \label{fig:exp}
    \end{figure}

  By applying the collision-free control term, three experiments of four Crazyflies\footnote{\url{https://www.bitcraze.io/}} quadrotors tracking a DJI RoboMaster\footnote{\url{https://www.robomaster.com/}} ground robot are implemented in an indoor area (Fig.~\ref{fig:exp}). All the measurements are indirectly obtained by the OptiTrack motion capture system. Initially, the UAVs are hovering at $1m$ above the ground. Let $ U_{safe}=0.5 $, $d_{safe}=0.55m$ and $k_1=30,k_2=5$. The expected radius of the circular formation is chosen as $\rho=0.4m$. 

   1) \emph{Target at a Constant Velocity:} The DJI RoboMaster is moving at a constant velocity $\boldsymbol{v}_{0} = [0.1,0]^T m/s $, starting from an initial position of $ [-1.5,0]^Tm$. The initial positions of Crazyflies projection to $x,y$-plane are $[-0.7,0.5]^T,[-1.7,1]^T,[-2.3,0.3]^T$ and $[-1,-1]^Tm$. The experimental process is shown in Fig.~\ref{fig:constant_EXP_video}. As shown in Fig.~\ref{fig:errors_CONSTANT_EXP}, due to measurement noise and process noise, the relative position estimation errors between the UAVs converge to a bounded value. This further prevents the formation error from converging to zero. As a result, in Fig.~\ref{fig:traj_CONSTANT_EXP}, there is a deviation between the actual UAV positions and the desired circular formation. Due to the symmetry of the circular formation, the formation tracking error exhibits favorable convergence, ensuring that the target is enclosed at the center of the formation.

     \begin{figure}
              \centering
      \includegraphics[width=0.9\linewidth]{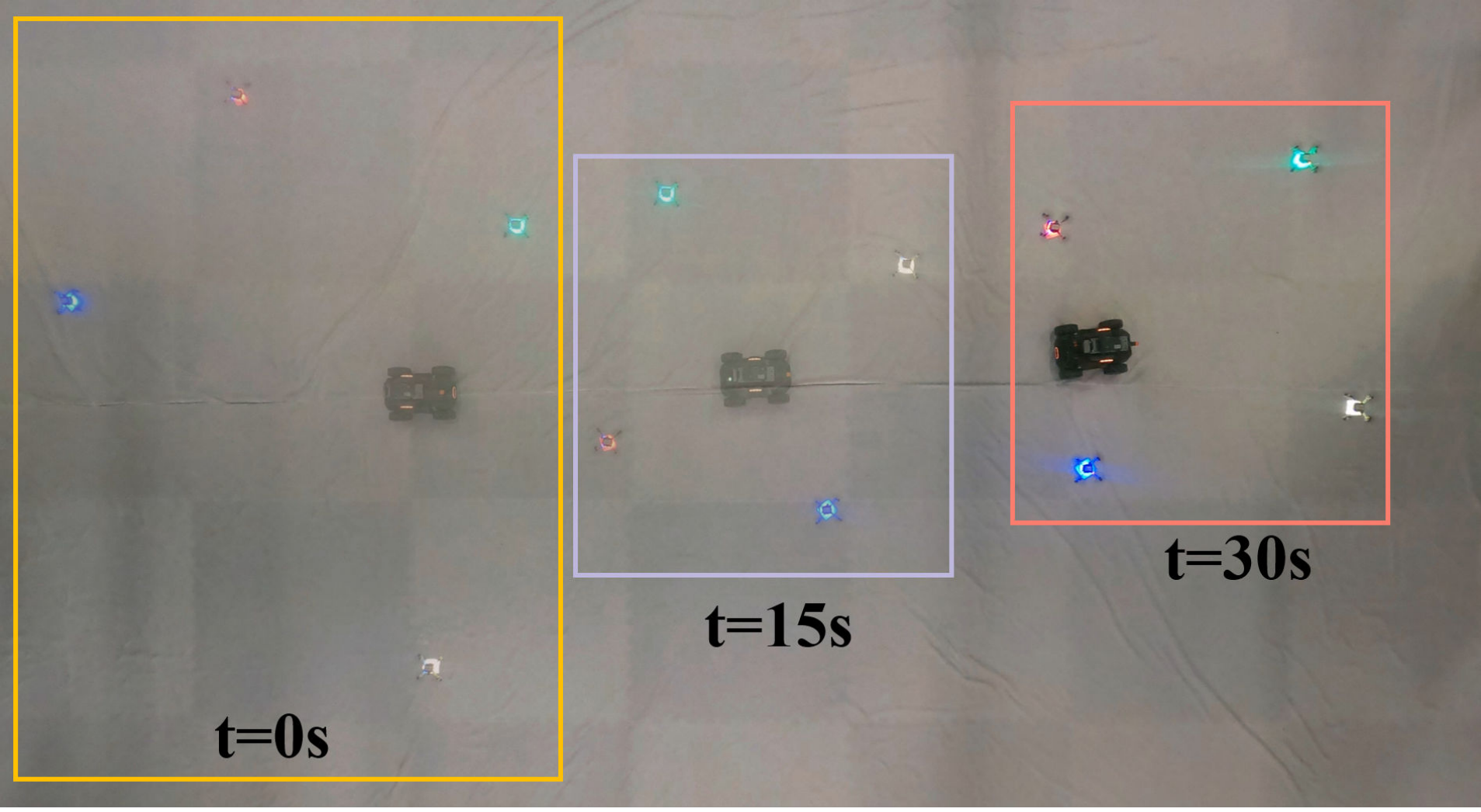}
      \caption{Experiment of four Crazyflies tracking the DJI RoboMaster moving at a constant velocity.}
      \label{fig:constant_EXP_video}
    \end{figure}

        \begin{figure}
              \centering
      \includegraphics[width=\linewidth]{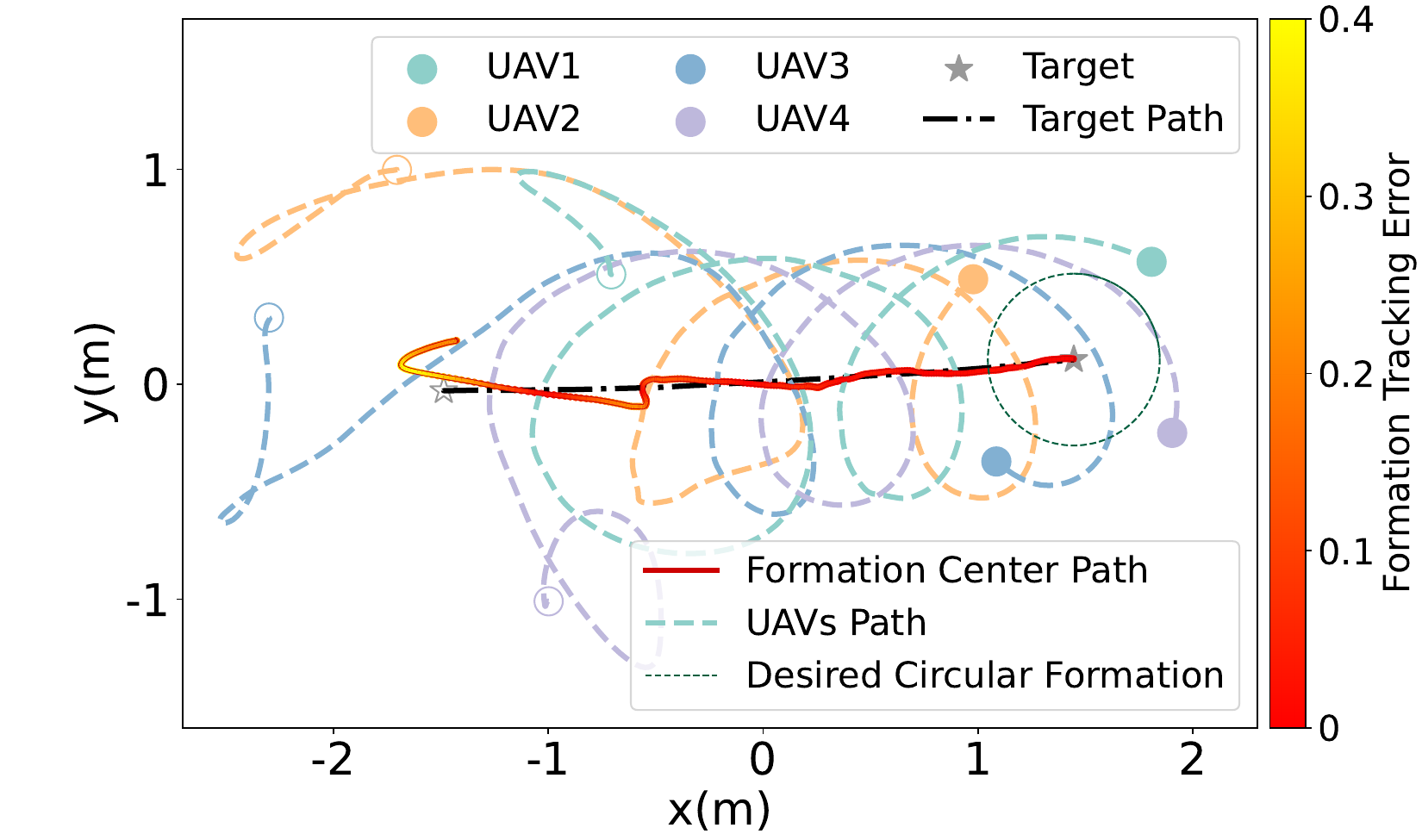}
      \caption{The paths from the experiment tracking a target with constant velocity.}
      \label{fig:traj_CONSTANT_EXP}
    \end{figure}
    
     \begin{figure}
              \centering
      \includegraphics[width=\linewidth]{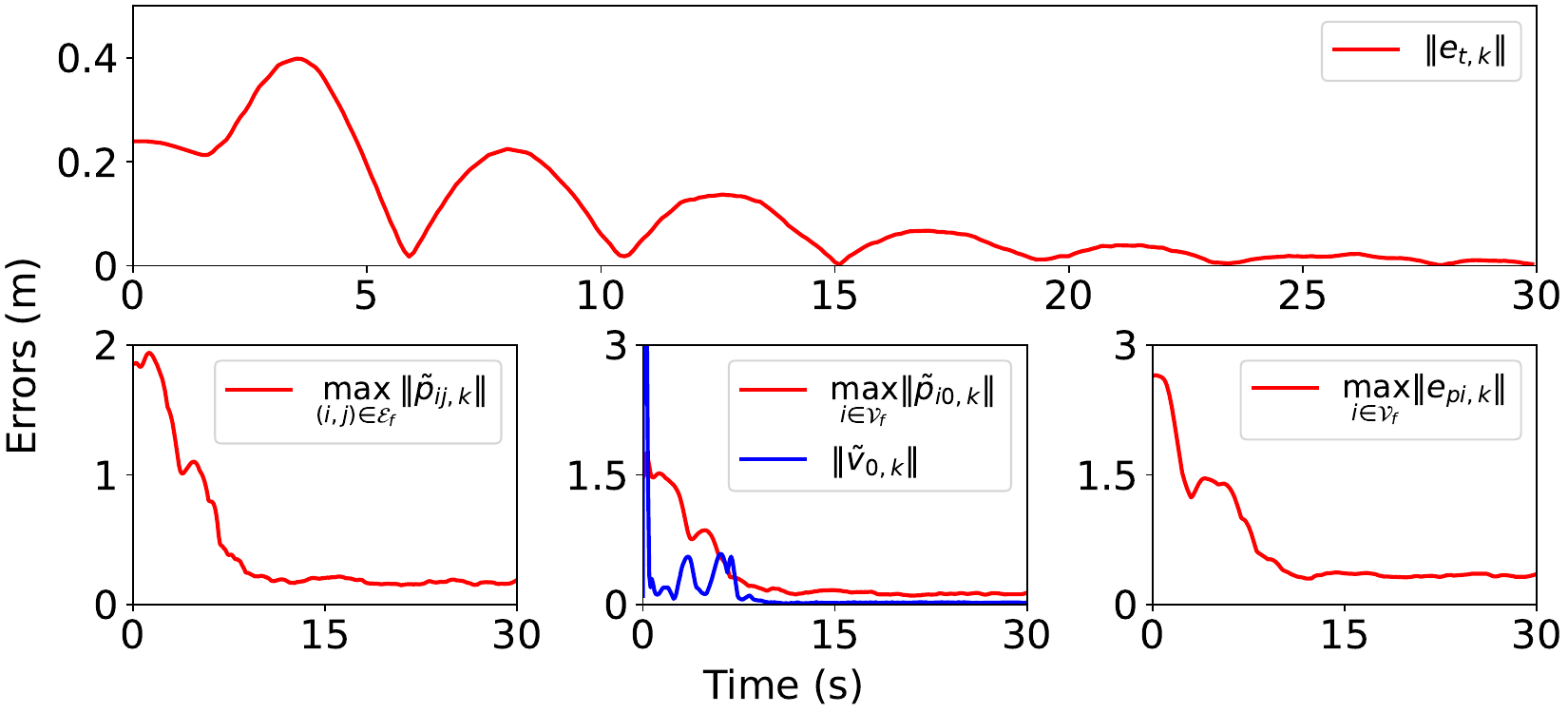}
      \caption{Error analysis of the experiment tracking a target with constant velocity.}
      \label{fig:errors_CONSTANT_EXP}
    \end{figure}

 2) \emph{Target at a Varying Velocity:} The DJI RoboMaster travels around a circle trajectory with a radius of $1m$ at a speed of $0.1m/s$ from the initial position $ [-1,0]^Tm$. The initial positions of Crazyflies projection to $x,y$-plane are $[-1.5,1]^T,[-1,-1]^T,[-2,0.3]^T$ and $[-0.5,0.5]^Tm$. The experimental process is shown in Fig.~\ref{fig:varying_EXP_video}. Fig.~\ref{fig:traj_VARYING_EXP} gives the trajectories of the Crazyflies and the DJI RoboMaster. The tracking and estimation errors are given in Fig.~\ref{fig:errors_VARYING_EXP}. 
     \begin{figure}
              \centering
      \includegraphics[width=0.85\linewidth]{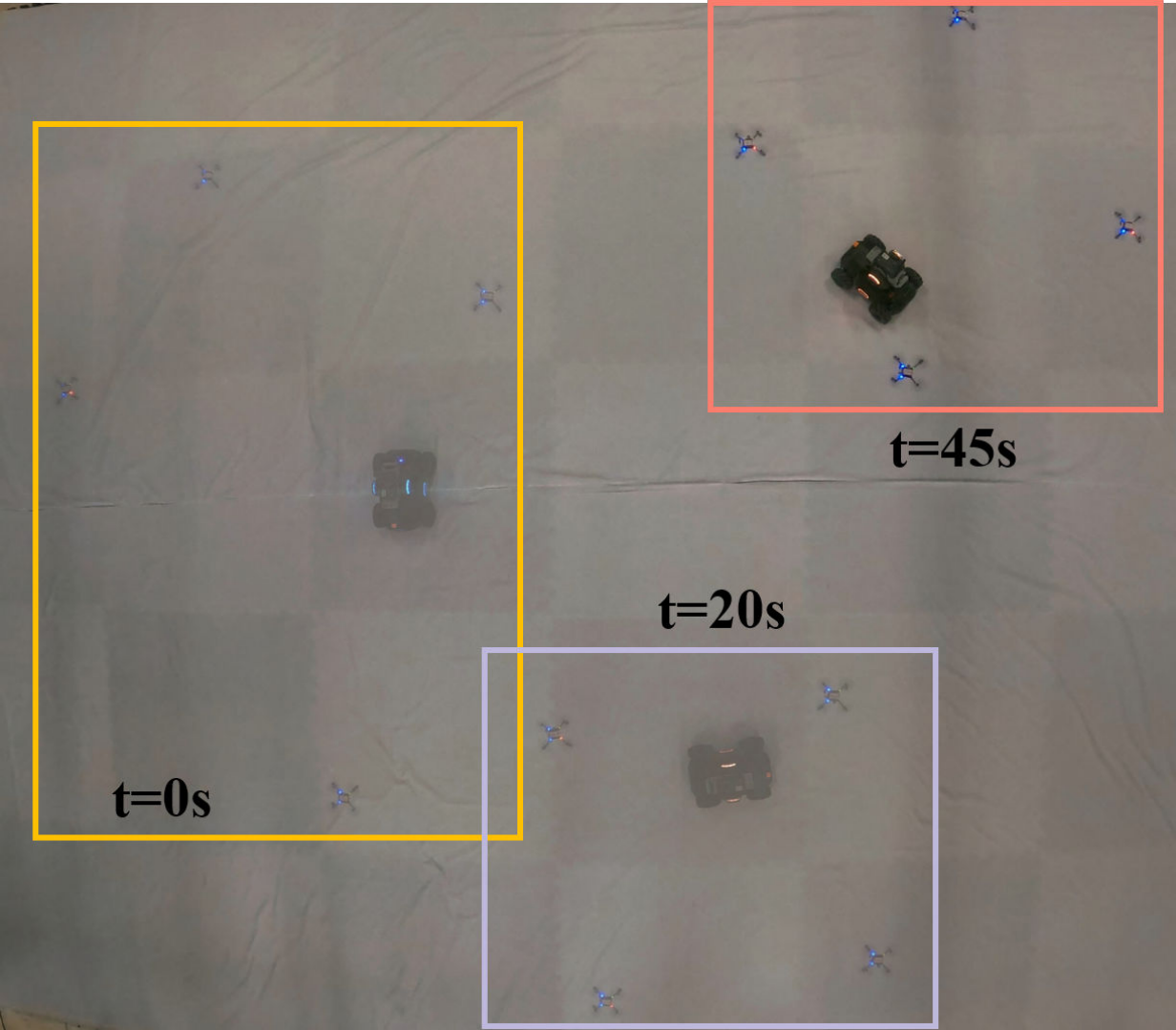}
      \caption{Experiment of four Crazyflies tracking the DJI RoboMaster moving at a varying velocity.}
      \label{fig:varying_EXP_video}
    \end{figure}
    
        \begin{figure}
              \centering
      \includegraphics[width=0.9\linewidth]{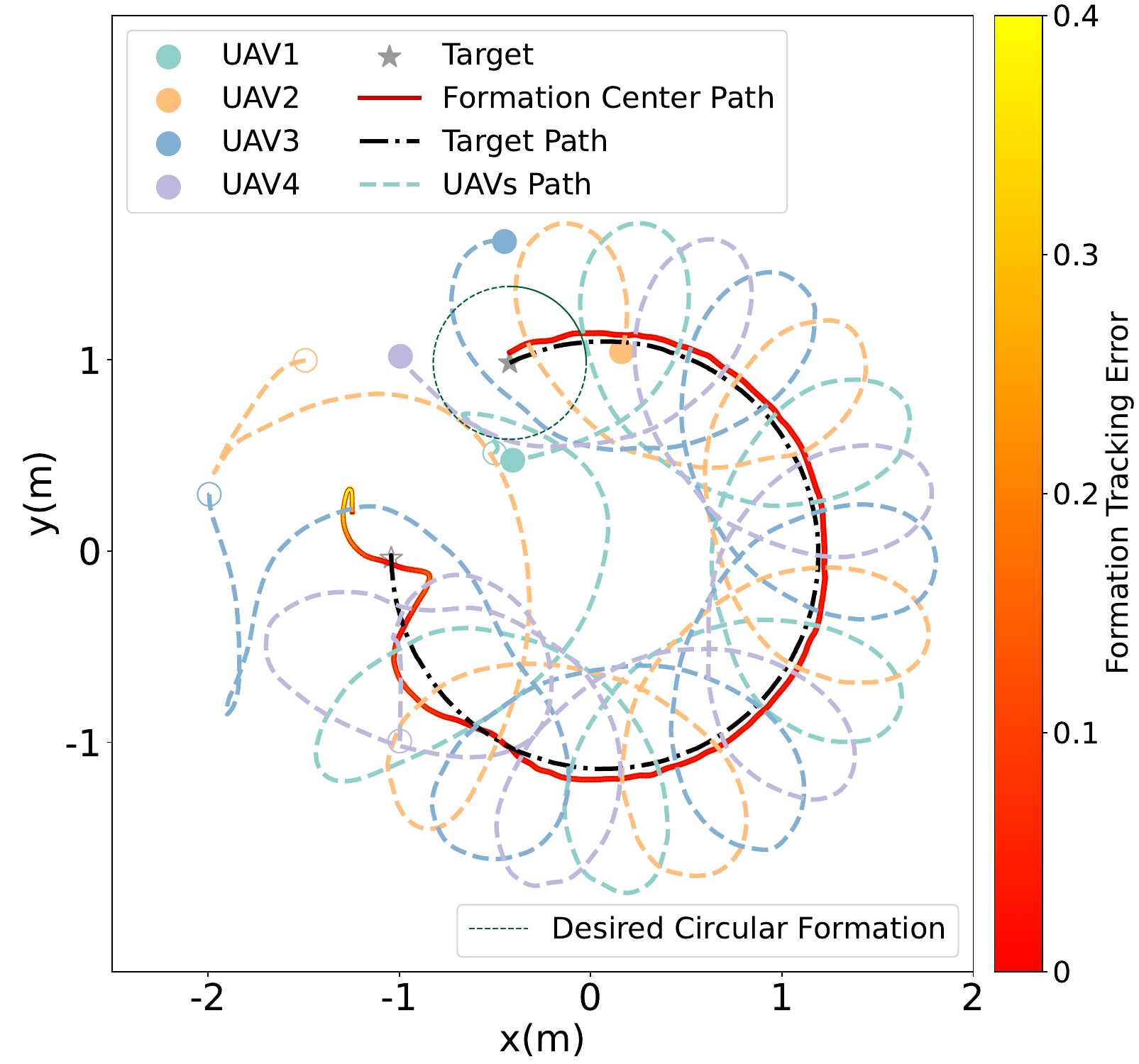}
      \caption{The paths from the experiment tracking a target with varying velocity.}
      \label{fig:traj_VARYING_EXP}
    \end{figure}
    
     \begin{figure}
              \centering
      \includegraphics[width=\linewidth]{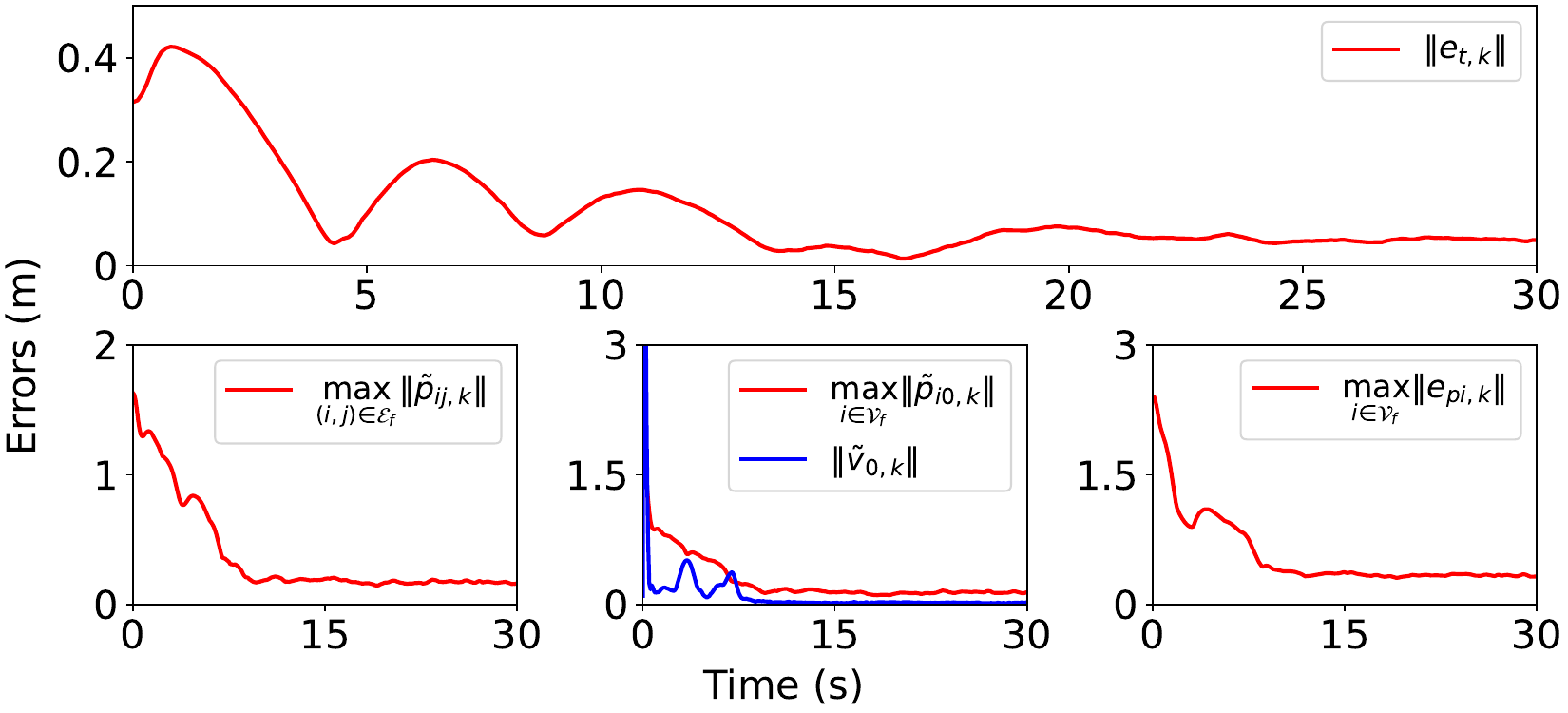}
      \caption{Error analysis of the experiment tracking a target with varying velocity.}
      \label{fig:errors_VARYING_EXP}
    \end{figure}

    3) \emph{Vision Occlusion:} In this experiment, a cylindrical obstacle is placed at $[-1.5,0.5]^Tm$, with radius of $0.2m$. The DJI RoboMaster is moving at a constant velocity $\boldsymbol{v}_{0} = [0.1,0]^T m/s $ and the initial position is $ [-1.5,0]^Tm$. The initial positions of Crazyflies projection to $x,y$-plane are $[-0.7,0.5]^T,[-1.7,1]^T,[-2.3,0.3]^T$ and $[-1,-1]^Tm$. The experimental process is shown in Fig.~\ref{fig:constant_obs_EXP_video}. As shown in Fig.~\ref{fig:traj_CONSTANT_OBS_EXP}, UAV $2$ lost the sensing of the target during $0$--$5.1s$. The green lines indicate that the cylinder blocks the line of sight.
     \begin{figure}
              \centering
      \includegraphics[width=0.9\linewidth]{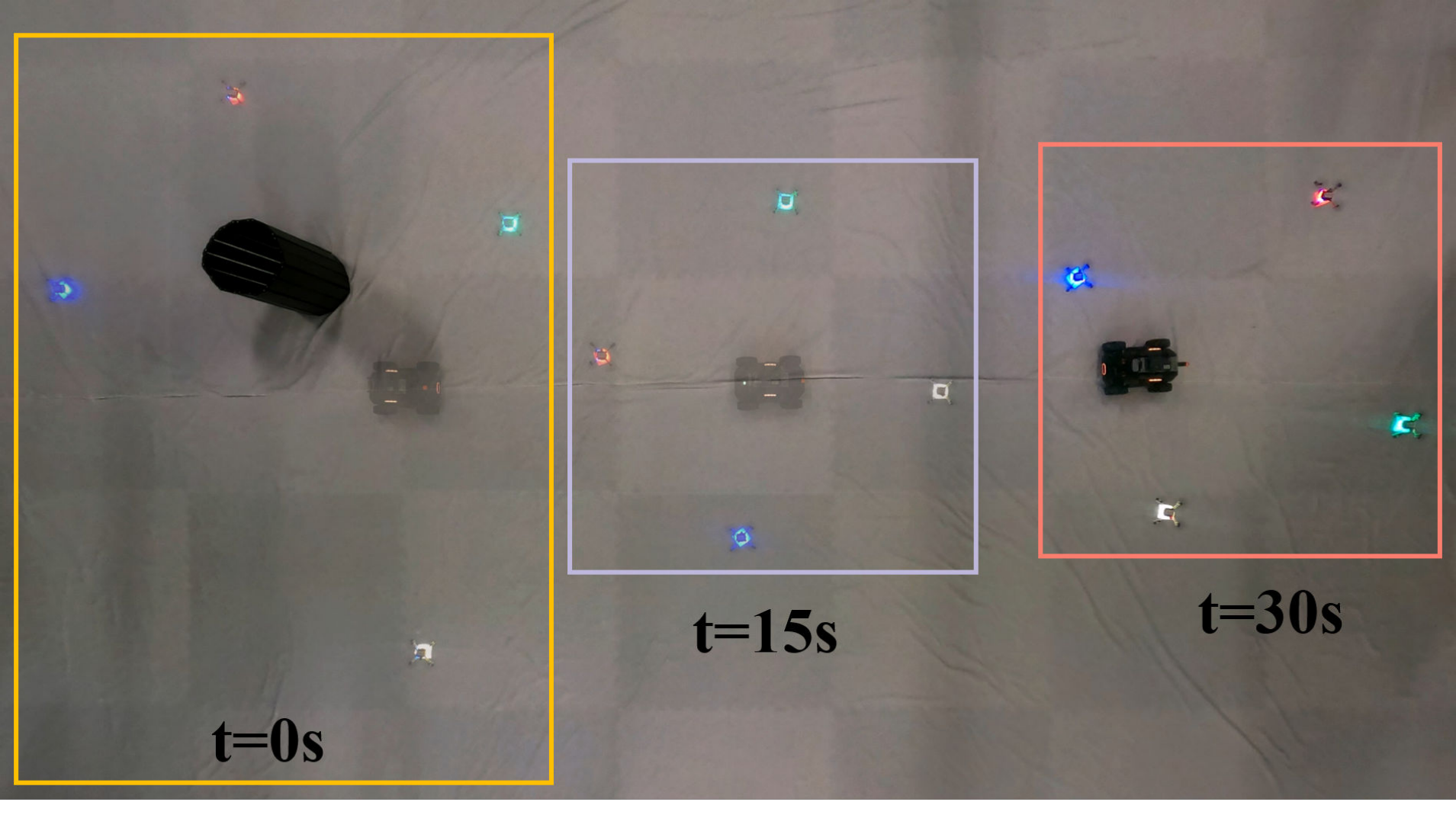}
      \caption{Experiment of four Crazyflies tracking the DJI RoboMaster moving at a constant velocity with vision occlusion..}
      \label{fig:constant_obs_EXP_video}
    \end{figure}
        \begin{figure}
              \centering
      \includegraphics[width=0.9\linewidth]{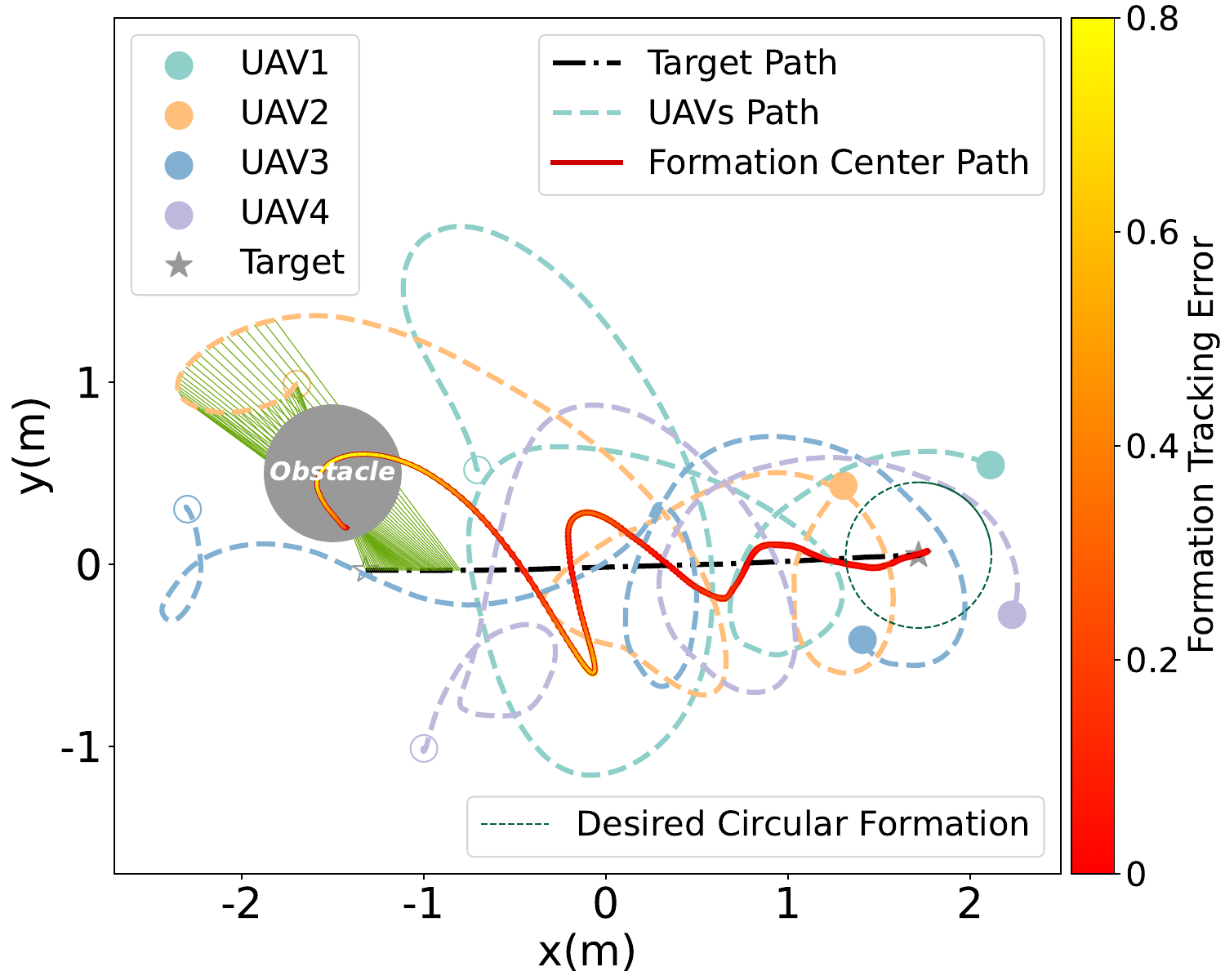}
      \caption{The paths of the experiment of the target at a constant velocity in the scenarios with vision occlusion. The green lines between the UAV $2$ and the target indicate that the FoV is blocked.}
      \label{fig:traj_CONSTANT_OBS_EXP}
    \end{figure}
    
     \begin{figure}
              \centering
      \includegraphics[width=\linewidth]{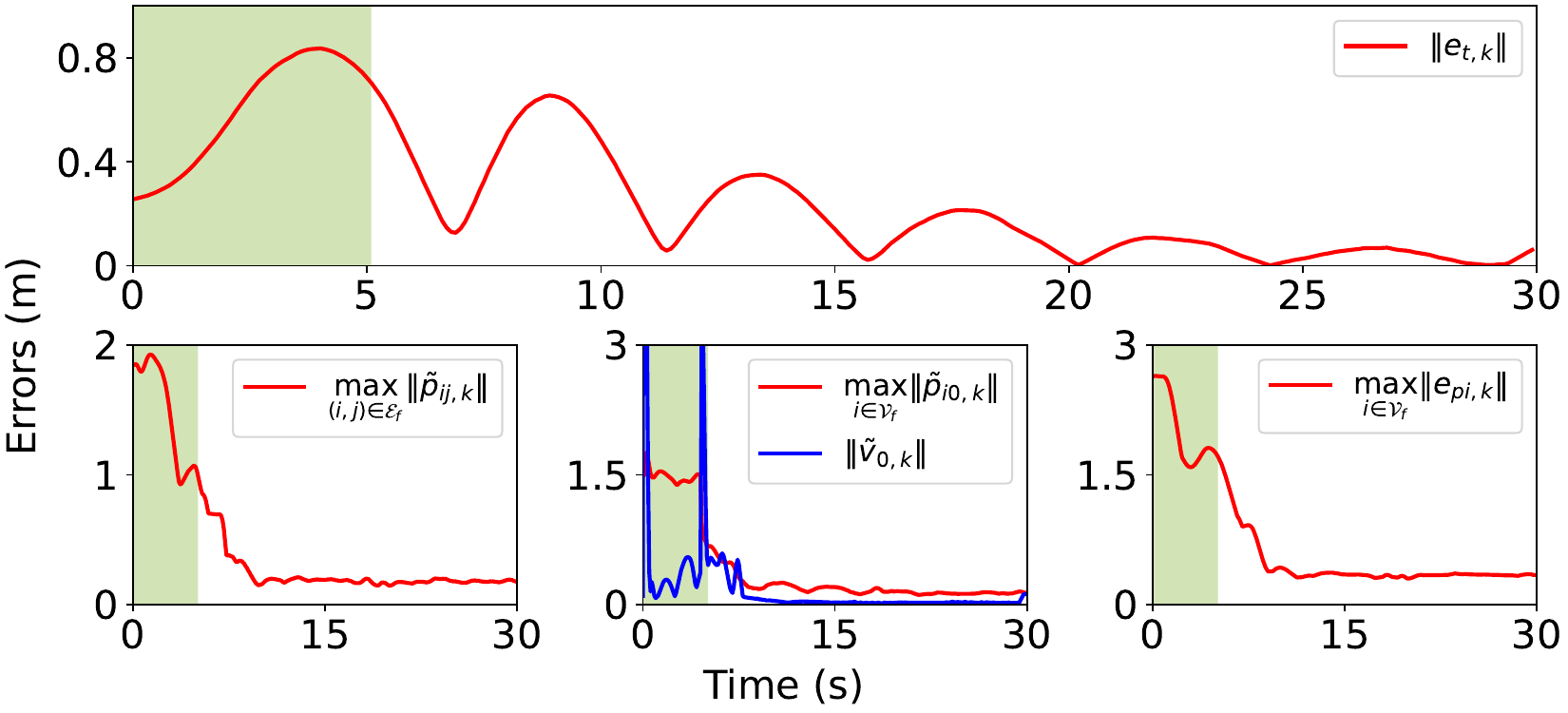}
      \caption{ Error analysis of the experiment of the target at a constant velocity in the scenarios with vision occlusion. The green-filled area indicates the period when UAV $2$ lost measurement with the target.}
      \label{fig:errors_CONSTANT_OBS_EXP}
    \end{figure}

  \section{CONCLUSION} \label{conclu}
  In this paper, coordinated formation control for enclosing and tracking a moving target in the absence of an external positioning system is studied. To this end, a relative state estimation framework is proposed by integrating RLSE and DKF methods. After that, a coupled oscillator model-based method was proposed to realize the desired time-varying formation design. Finally, a consensus-based formation controller is presented, which simultaneously guarantees the formation of enclosing and tracking tasks and ensures the convergence of the estimators. The proposed methods were not only suitable for targets moving at a constant velocity but also had a certain tracking ability for maneuvering targets. However, this paper only considers the target moving on flat terrain. The problem is more complicated by the target moving over rough terrain but is more common in rescue situations. Therefore, a promising direction is to consider the term of terrain following in the formation enclosing and tracking task. In addition, the multi-target tracking task will also be a potential research direction in the future.

\bibliography{Reference/2024TAES.bib, Reference/2024TAES_add.bib,Reference/refer} %
\bibliographystyle{IEEEtran} 

\end{document}